\documentclass{article}

\PassOptionsToPackage{numbers, compress}{natbib}

\usepackage[preprint]{neurips_2026}

\usepackage[utf8]{inputenc}
\usepackage[T1]{fontenc}
\usepackage{hyperref}
\usepackage{url}
\usepackage{booktabs}
\usepackage{amsfonts}
\usepackage{nicefrac}
\usepackage{microtype}
\usepackage{xcolor}

\usepackage{times}
\usepackage{url}
\usepackage{amsmath}
\usepackage{amssymb}
\usepackage{amsthm}

\newtheorem{claim}{Claim}
\newtheorem{remark}{Remark}
\theoremstyle{definition}
\newtheorem{definition}{Definition}
\theoremstyle{plain}
\usepackage{graphicx}
\usepackage{natbib}
\usepackage{algorithm}
\usepackage[algo2e]{algorithm2e}
\usepackage{multirow}
\usepackage{subcaption}
\usepackage{tikz}
\usepackage{wrapfig}
\usepackage{float}
\usepackage{mathtools}
\usepackage{cancel}
\usepackage{pgfplots}
\pgfplotsset{compat=1.18}
\usetikzlibrary{calc, arrows.meta}
\SetKw{KwBy}{by}
\usepackage{tabularx}

\usepackage{titlecaps}
\Addlcwords{a an the and but or for in on at to by with}

\newcommand{\eddy}{\textsc{EDDY}}
\newcommand{\eddyrbf}{\textsc{EDDY-RBF}}
\newcommand{\pg}{\textsc{PG}}
\newcommand{\cads}{\textsc{CADS}}
\newcommand{\cno}{\textsc{CNO}}
\newcommand{\base}{\textsc{I.I.D}}

\newcommand{\R}{\mathbb{R}}
\newcommand{\E}{\mathbb{E}}
\newcommand{\N}{\mathcal{N}}
\DeclareMathOperator{\divergence}{div}

\newcommand{\Ap}{\mathcal{A}_p}
\newcommand{\Apt}{\mathcal{A}_{p_t}}
\newcommand{\score}{\nabla \log p}
\newcommand{\Weiner}{W_t}
\newcommand{\drift}{\mu}
\newcommand{\vol}{\sigma}

\newcommand{\dd}{d}
\newcommand{\np}{n}
\newcommand{\fdeps}{\epsilon}
\newcommand{\wg}{w_g}

\newcommand{\xt}[1][t]{x_{#1}}
\newcommand{\Xt}[1][t]{X_{#1}}

\newcommand{\si}{s^{(i)}}
\newcommand{\vj}[1][j]{v^{(#1)}}

\newcommand{\muj}[1][j]{\mu^{(#1)}}
\newcommand{\psii}{\psi^i}
\newcommand{\xp}[1]{x^{#1}}
\newcommand{\xpt}[2][t]{x^{#2}_{#1}}
\newcommand{\xpnit}[2][t]{x^{-#2}_{#1}}

\newcommand{\Ai}{A^{(i)}}

\newcommand{\dij}{\delta^{(ij)}}
\newcommand{\kij}{k(\xp{i}, \xp{j})}
\newcommand{\rij}{r^{(ij)}}
\newcommand{\Kij}{K^{(ij)}}
\newcommand{\Aj}{A^{(ij)}}

\title{Diverse Sampling in Diffusion Models with \\ Marginal Preserving Particle
Guidance}

\author{ Gal Vinograd \\ Faculty of Engineering \\ Bar-Ilan University\\ Ramat
Gan, Israel \\ \texttt{gal.vinograd@biu.ac.il} \\ \And Idan Achituve \\
Independent Researcher \\ \texttt{idanachi@gmail.com} \\ \AND Ethan Fetaya \\ Faculty
of Engineering \\ Bar-Ilan University\\ Ramat Gan, Israel \\ \texttt{ethan.fetaya@biu.ac.il}
}

\begin{document}
	\maketitle

	\begin{abstract}
		We present \eddy{} (\textbf{E}xact-marginal \textbf{D}iversification via \textbf{D}ivergence-free
		d\textbf{Y}namics), a guidance mechanism for diffusion and flow matching
		models that promotes diversity among samples generated while maintaining
		quality. \eddy{} exploits symmetries of the Fokker–Planck equation, using drift
		perturbations that change particle trajectories while preserving the evolving
		marginal distribution. We instantiate this principle through kernel-based
		anti-symmetric pairwise matrix fields, constructed from the repulsive directions.
		The resulting divergence-free dynamics promote diversity at the joint
		particle level while preserving each particle’s marginal distribution
		without any additional training. As computing the guidance can be computationally
		expensive in cases such as text-to-image generation with perceptual embeddings,
		we propose practical approximations as an effective and efficient solution. Experiments
		on synthetic distributions and text-to-image generation show that \eddy{}
		improves diversity while maintaining strong distributional fidelity compared
		to common baselines.
	\end{abstract}

	\section{Introduction}
	Diffusion and flow matching models \citep{albergo2023building,ho_denoising_2020,lipman_flow_2022, liu_flow_2022, sohl2015deep,song_score-based_2021}
	have become a prominent class of generative models, enabling high-quality sampling
	across multiple modalities, such as images
	\cite{ dhariwal2021diffusion, lipman_flow_2022, rombach2022high,song2021denoising},
	molecules \cite{corso2023diffdock,hoogeboom2022equivariant,xu2022geodiff, jing2022torsional},
	and texts
	\cite{austin2021structured,li2022diffusion, lou2024discrete, sahoo2024simple}.
	These models can be viewed as transporting a simple prior distribution to a complex
	target distribution through time-dependent stochastic or deterministic dynamics.
	One advantage of this formulation is that generation can be modified at
	inference time through guidance \cite{ho_classifier-free_2021,rombach2022high}:
	the drift can be adjusted to favor a condition, satisfy a constraint, or steer
	samples toward desired properties. However, when multiple samples are drawn for
	the same condition, the resulting set may contain similar outputs.

	This issue has motivated a growing body of work on diverse sampling
	\cite{kynkaanniemi2024applying, lu2024procreate,sadat_cads_2024}. In many applications,
	one does not want a single output, but rather a small set of samples that explores
	different plausible modes of the conditional distribution. Naively drawing
	independent samples is often inefficient, partly because guidance improves sample
	fidelity at the expense of diversity \cite{ho_classifier-free_2021}. Recent
	methods addressed this issue by introducing interactions between samples, for example,
	by optimizing initial noise latents \cite{ harrington2025s,kim_diverse_2025}
	or adding repulsive particle potentials during sampling
	\cite{corso_particle_2023, jalali2026sparke,morshed_diverseflow_2025}. These approaches
	can improve diversity but can suffer from severe drawbacks. Mainly, we found that
	adding repulsive terms during sampling creates noticeable artifacts, while
	optimizing the noise latents had a limited effect on diversity (see examples in
	Fig. \ref{fig:chairs_grid}). \\

	In this work, we ask whether particle-based diversity can be introduced while
	preserving the target marginal distribution of each sample in a \textit{training-free}
	manner. Our starting point is the observation that the Fokker–Planck equation
	that governs the probability evolution over time admits nontrivial symmetries:
	certain drift perturbations can change particle trajectories without changing the
	evolving marginal density. In particular, perturbations constructed from anti-symmetric
	matrix fields through a Stein-type operator yield zero additional contribution
	to the Fokker–Planck equation. We use this symmetry to construct a particle
	guidance mechanism in which each particle interacts with the others through a
	kernel-induced repulsive direction. For each pair of particles, we form an
	anti-symmetric matrix from the difference of two outer products involving the
	repulsive kernel direction. Averaging these pairwise contributions gives a guidance
	field that promotes diversity at the joint level while preserving the single-particle
	marginal distribution.

	\begin{figure}[t]
		\centering
		\includegraphics[width=0.7\linewidth]{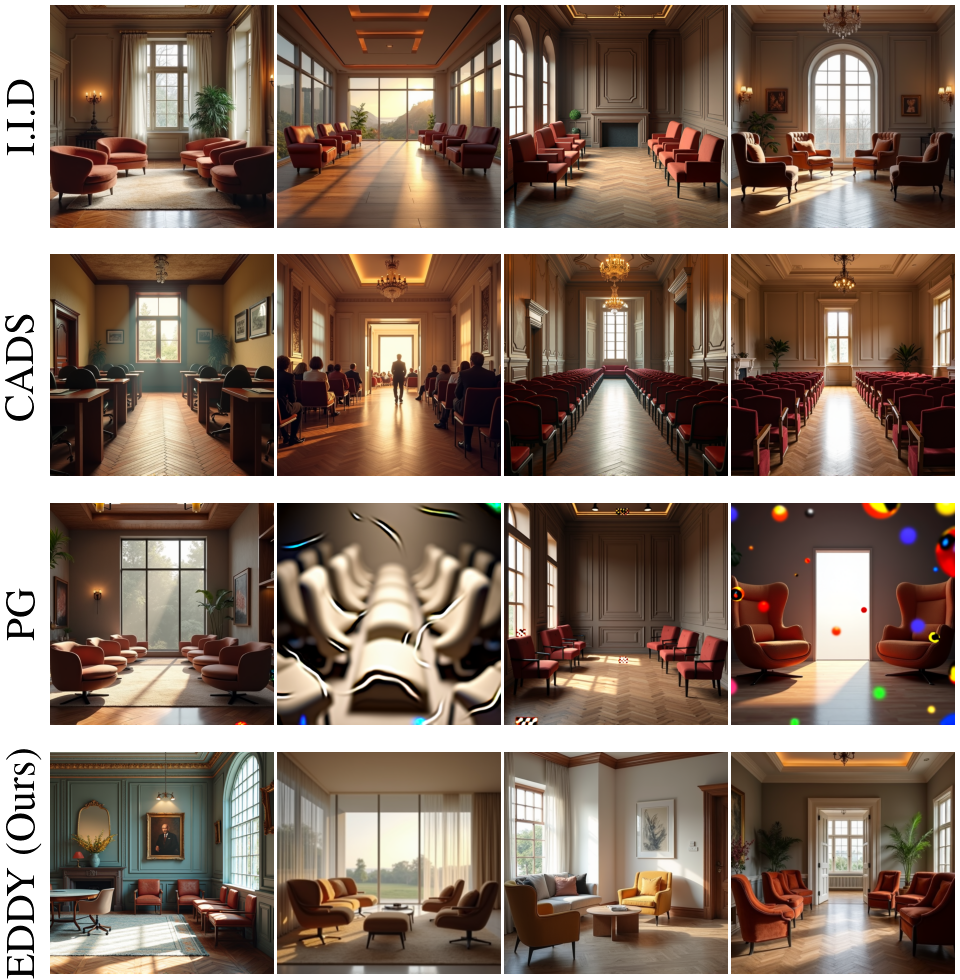}
		\caption{ Qualitative comparison on the prompt \emph{``No one is in the room
		but there are chairs. high quality, 8k.''} \base{} (top): Low diversity
		between samples. \cads{} (upper-middle): High-quality samples but with low prompt
		adherence (people in the second sample). \pg{} (lower-middle): Increase in
		variety with visible artifacts. \eddy{} (bottom): Diverse output that preserves
		photorealism and prompt fidelity. }
		\label{fig:chairs_grid}
	\end{figure}

	For practical text-to-image generation, we often wish to define similarity not
	directly in pixel or latent coordinates, but in a perceptual feature space
	such as DINO or CLIP embeddings. In this setting, computing the exact kernel
	can be prohibitively expensive due to second order derivatives. We therefore use
	finite-difference and Hutchinson trace estimation to approximate the expensive-to-compute
	elements needed to form the marginal-preserving drift perturbation. The exact
	theoretical guarantee no longer holds under these approximations, but the construction
	continues to be guided by it: the resulting drift remains a finite-sample
	estimate of a provably marginal-preserving field, and we find empirically that
	it inherits the robustness of the exact method, maintaining high sample
	quality where unstructured repulsion baselines visibly degrade.

	To summarize, in this paper, we make the following novel contributions: (1) We
	identify symmetries of the Fokker–Planck equation and prove that they can be used
	as a mechanism for particle guidance that preserves the underlying marginal distribution.
	(2) We introduce a kernel-based construction of anti-symmetric pairwise
	matrices that induces repulsive interactions between particles while remaining
	compatible with these symmetries. (3) We develop practical approximations that
	allow the method to operate with feature-space kernels, making it suitable for
	text-to-image generation and other high-dimensional settings where semantic
	similarity measures are needed. (4) We evaluate the method on synthetic distributions
	and text-to-image generation, showing that it improves sample diversity while
	maintaining stronger distributional fidelity than leading baselines.

	\begin{figure}[t]
		\centering
		\includegraphics[width=0.5\linewidth]{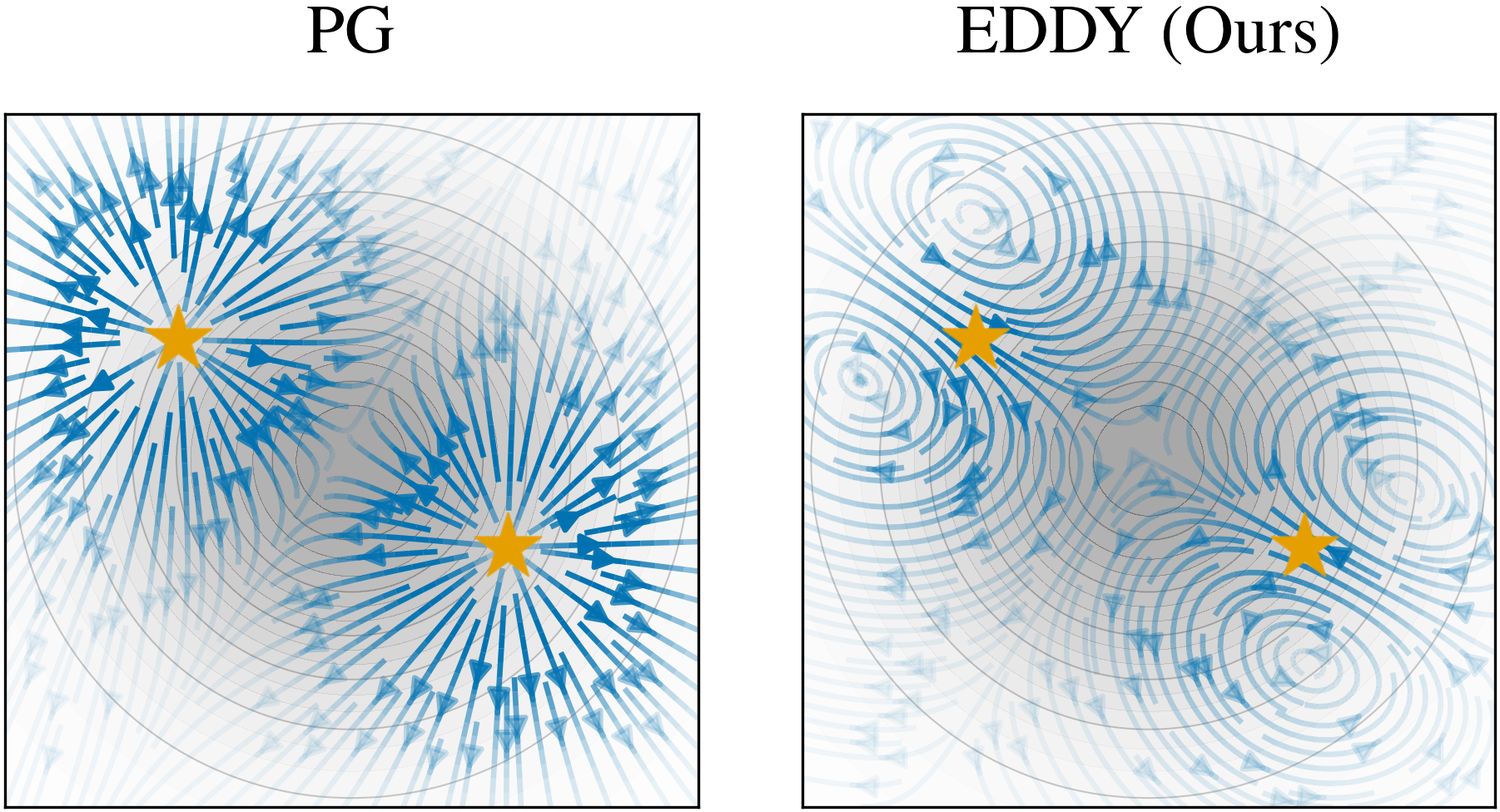}
		\caption{ Transport fields induced by \pg{} (left) and \eddy{} (right) for a
		standard Gaussian target (gray contours). Depicting how the two particles (stars)
		will affect a third test particle at each location. \pg{} applies the mean
		negative RBF kernel gradient, producing a purely repulsive field that pushes
		particles away from one another and distorts the sampling distribution.
		\eddy{} constructs an anti-symmetric drift from inter-particle score interactions,
		yielding a divergence-free field that repels the particles with the \emph{guarantee}
		of preserving the target marginal. }
		\label{fig:teaser}
	\end{figure}

	\section{Background}
	Throughout the paper, state variables such as $\xt \in \R^{d}$ and functions
	such as $\mu_{t}$ and $p_{t}$ carry implicit argument $\xt$ unless stated
	otherwise; and $\nabla$ ($\divergence$) denotes the gradient (resp. divergence)
	with respect to $\xt$.

	\subsection{Diffusion Models and Flow Matching}

	Modern generative modeling transports a simple prior $p_{0} = \N(0, I_{d})$ to
	a target distribution $p_{1}$ over the unit interval $t\in[0,1]$ through continuous-time
	dynamics. DDPM~\cite{ho_denoising_2020,song_score-based_2021} fixes a Gaussian
	noising process with prior $\xt[0] \sim \N(0, I_{d})$ and noise schedule
	$\beta_{t}$. Sampling $\xt[1] \sim p_{1}$ is done by integrating the time-reversal
	SDE~\cite{anderson_reverse-time_1982},
	\begin{equation}
		d\xt = \left[\tfrac{1}{2}\beta_{t}\,\xt + \beta_{t}\,\score_{t}(\xt)\right] d
		t + \sqrt{\beta_{t}}\, d\Weiner, \label{eq:reverse_sde}
	\end{equation}
	forward in $t$ from $\xt[0]\!\sim\!p_{0}$ to $\xt[1]\!\sim\!p_{1}$, where $\Weiner$
	is the Wiener process. Optimal-transport flow matching (OT-FM)~\cite{albergo2023building,lipman_flow_2022,liu_flow_2022}
	instead sample by integrating the following deterministic ODE:
	\begin{equation}
		\frac{\partial}{\partial t}\xt = \E[\xt[1] - \xt[0] \mid \xt].
	\end{equation}
	Both frameworks therefore fall under a common It\^o SDE
	\begin{equation}
		d\xt = \drift_{t}(\xt)\, dt + \vol_{t}(\xt)\, d\Weiner, \label{eq:forward_sde}
	\end{equation}
	with drift $\drift : \R^{d} \times [0,1] \to \R^{d}$ and volatility $\vol : [0,
	1] \to \R_{\geq 0}$. Diffusion takes $\vol_{t} > 0$ and absorbs the score into
	the drift as in~\eqref{eq:reverse_sde}, while OT-FM has a degenerate
	$\vol \equiv 0$ deterministic flow and learns $\drift_{t}$ directly. For simplicity,
	we assume that if a condition is given (e.g. a prompt) then it is already incorporated
	into the drift.

	The score $\score_{t}(\xt)$ is available in both methods. The output of
	diffusion models is the score (up to a known scaling factor). In flow matching
	it can be inferred from $\mathbb{E}[\xt[1] - \xt[0] \mid \xt]$ through Tweedie's
	formula~\cite{efron_tweedies_2011}:
	\begin{equation}
		\score_{t}(\xt) = \frac{t\E[\xt[1]-\xt[0]\mid\xt] - \xt}{1-t}. \label{eq:tweedie_flow}
	\end{equation}
	The complete derivation can be found in Appendix~\ref{app:tweedie}.

	\subsection{Fokker--Planck Equation and its Symmetries}
	\label{sec:symmetries}

	The state $\xt$ in~\eqref{eq:forward_sde} is a random variable, and its marginal
	density $p_{t}$ on $\R^{d}$ evolves deterministically under the Fokker-Planck equation
	(FPE),
	\begin{equation}
		\frac{\partial}{\partial t}p_{t} \;=\; -\divergence (p_{t}\,\drift_{t}) \;+\;
		\frac{\vol_{t}^{2}}{2}\,\Delta p_{t}, \label{eq:fpe}
	\end{equation}
	so each pair $(\drift, \vol)$ induces a path of marginals
	$\{p_{t}\}_{t\in[0,1]}$ connecting prior $p_{0}$ to target $p_{1}$. This mapping
	is not one-to-one: many different combinations of $(\drift, \vol)$ can produce
	the {identical} evolution of $p_{t}$. These are the \emph{symmetries} of the
	FPE: changes to $(\drift, \vol)$ that leave the density evolution, and hence
	the marginal at every time $t$, invariant, while reshaping how individual
	particles move.

	\begin{definition}[Matrix Stein operator]
		\label{def:Ap_matrix} For a matrix field $F : \R^{\dd} \to \R^{\dd\times \dd}$,
		the Stein operator \citep{stein_bound_1972} over $F$ is defined as
		\begin{equation}
			\label{eq:Ap_matrix}\Ap(F) \;=\; \frac{1}{p}\nabla\left(pF\right) \;=\; \divergence
			F \;+\; F\,\nabla \log p,
		\end{equation}
		where $\divergence F$ is the row-wise divergence:
		$(\divergence F)_{i} = \sum_{j} \frac{\partial}{\partial x_{j}}F_{ij}$.
	\end{definition}

	\begin{claim}
		[Symmetry of the FPE] \label{claim:rot_symmetry} Let $A : \R^{\dd} \to \R^{\dd
		\times \dd}$ be any smooth anti-symmetric matrix field (i.e.,
		$A^{\top} = -A$), that can depend on $t$. Then the guidance $\psi_{t} = \Apt(
		A)$ yields the same Fokker--Planck equation \eqref{eq:fpe} and hence the same
		marginal $p_{t}$ for all $t$. \citep{hua_simulation-free_2025,richter-powell_neural_2022}
	\end{claim}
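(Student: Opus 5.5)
The plan is to show that adding $\psi_t = \Apt(A)$ to the drift adds nothing to the right-hand side of \eqref{eq:fpe}. Replacing $\drift_t$ by $\drift_t + \psi_t$ changes that right-hand side by exactly $-\divergence(p_t\,\psi_t)$. So it suffices to prove the pointwise identity $\divergence(p_t\,\Apt(A)) = 0$ for every $t$.

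First, I will undo the normalization in Definition~\ref{def:Ap_matrix}. Since $\Apt(A) = \frac{1}{p_t}\divergence(p_t A)$, with $\divergence$ taken row-wise, we get
\begin{equation*}
p_t\,\psi_t = \divergence(p_t A),
\end{equation*}
whose $i$-th component is $\sum_j \partial_{j}(p_t A_{ij})$. This equality is just the product rule $\divergence(p_tA) = p_t\divergence A + A\nabla p_t$ together with $\nabla p_t = p_t \nabla\log p_t$.

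Next, I will write $B = p_t A$, which is again anti-symmetric and smooth wherever $p_t$ is smooth. Then
\begin{equation*}
\divergence(p_t\psi_t) = \sum_i\sum_j \partial_i\partial_j B_{ij}.
\end{equation*}
By equality of mixed partials (Schwarz), $\partial_i\partial_j$ is symmetric in $(i,j)$, while $B_{ij}$ is anti-symmetric. Swapping the summation indices therefore shows the double sum equals its own negative, so it vanishes. Hence the modified drift produces the same Fokker--Planck equation, with identical coefficients and the identical $\frac{\vol_t^2}{2}\Delta p_t$ term. The same family $\{p_t\}$, starting from the same $p_0$, solves it.

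To conclude that the marginal is unchanged, and not merely that $p_t$ remains \emph{a} solution, I will appeal to uniqueness of solutions of the FPE with given initial data. This is the standard well-posedness assumption under which \eqref{eq:fpe} is said to determine $p_t$ from $(\drift,\vol)$, and I expect to state it as an assumption rather than prove it.

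The main obstacle is regularity. The argument needs $p_t A$ to be $C^2$, which requires $p_t>0$ and $p_t$ twice differentiable so that $\nabla\log p_t$ is defined. It also needs the perturbed SDE to be well posed, for example with locally Lipschitz $\psi_t$ and no blow-up, so that its marginal actually satisfies the FPE. I would handle this by assuming smooth, positive $p_t$ (true for $t<1$ under Gaussian noising) and sufficient growth control on $A$, and I would cite \citep{hua_simulation-free_2025,richter-powell_neural_2022} for this setting. The algebraic core, the vanishing of $\sum_{ij}\partial_i\partial_j B_{ij}$, is immediate.
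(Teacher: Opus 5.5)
Your proposal is correct and takes essentially the same route as the paper: both show that the extra term $\divergence(p_t\,\Apt(A)) = \divergence(\divergence(p_t A))$ vanishes, because the anti-symmetric field $p_t A$ is contracted against symmetric mixed partials. You also explicitly appeal to uniqueness of the Fokker--Planck solution, under positivity and regularity of $p_t$, to pass from ``$p_t$ still solves the equation'' to ``the marginal is $p_t$''; the paper leaves this step implicit in Claim~\ref{claim:rot_symmetry} and only invokes uniqueness later, in Step~6 of Appendix~\ref{app:proof}.
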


	\begin{proof}
		Substituting the guided drift $\tilde{\drift}_{t} = \drift_{t} + \psi_{t}$ into
		\eqref{eq:fpe}, the additional contribution is $\divergence(p_{t}\, \Ap(A)) =
		\divergence(p_{t}\nabla(p_{t} A )/p_{t}) = \divergence(\divergence(A\,p_{t}))$
		where the inner divergence acts on a matrix and the outer divergence acts on
		a vector. For any anti-symmetric $A$, this double divergence vanishes identically
		by the symmetry of mixed partial derivatives ($\tfrac{\partial^2}{\partial x_i \partial x_j}
		(A_{ij}p_{t}) + \tfrac{\partial^2}{\partial x_j \partial x_i}(A_{ji}p_{t}) =
		0$
		since $A_{ji}= -A_{ij}$).
	\end{proof}

	\begin{remark}
		The operator $\Ap$ in \eqref{eq:Ap_matrix} is exactly the classical Stein
		operator $f \mapsto \divergence(f) + \langle f, \nabla \log p \rangle$, but acting
		on each row of the matrix $F$ independently.
	\end{remark}

	For completeness, we note that an additional symmetries holds, for example, with
	volatility: adding noise $\epsilon$ with drift correction
	$\Ap(\epsilon\epsilon^{\top}/2)$ also preserves $p_{t}$. Both can be written compactly
	as $\tilde{\drift}_{t} = \drift_{t} + \Ap(A + \epsilon\epsilon^{\top}/2)$ for anti-symmetric
	matrix $A$ and positive semi-definite matrix $\epsilon\epsilon^{\top}/2$.

	\section{Related Work}
	\label{sec:related} \textbf{Diverse Conditional Sampling in Diffusion Models.}
	Guidance is a powerful mechanism to gain sampling control in diffusion and
	flow models. Early methods in this area of study are classifier guidance \cite{dhariwal2021diffusion}
	and classifier-free guidance (CFG) \cite{ho_classifier-free_2021}, both trade-off
	diversity for fidelity. Several studies suggested methods to improve diversity
	while maintaining high fidelity in CFG. We list here several prominant
	examples. Inspired by the forward process in diffusion models, \cads{} \citep{sadat_cads_2024}
	injects random noise into the conditioning signal. This increases sample diversity
	while maintaining strong image quality; however, the noisy conditioning leads to
	a noticeable decline in prompt alignment. \cite{kynkaanniemi2024applying}
	proposed limiting the guidance to an interval of sampling steps in the middle.
	In \cite{koulischer2026feedback} an adaptive guidance scale is proposed based on
	the posterior likelihood of the current time step and the condition prior.
	\cite{lu2024procreate} suggested to create better diversity by pushing away
	samples from a given reference set based on the approximate denoised values.

	\textbf{Diverse Sampling in Diffusion Models using Particles.} In the literature,
	several approaches proposed diverse sampling in diffusion and flow models
	based on interactions between particles. Both in \cite{kim_diverse_2025} and \cite{harrington2025s}
	optimization of the initial noise samples was proposed. In
	\cite{kim_diverse_2025}, the InfoNCE loss \cite{oord2018representation} was used
	based on the approximate denoised values of the particles, while in \citep{harrington2025s}
	an objective composed of quality, diversity and noise regularization was
	proposed. Other approaches intervene in the sampling process.
	\cite{parmar2026scaling} suggested sub-sampling at each flow step a subset of
	particles using quadratic integer programming that promotes both high quality
	and high diversity. Other approaches attempt to modify the SDE during sampling.
	\cite{kirchhof2025shielded} shift particles whose approximate denoised values
	fall inside balls centered at the elements of some reference set. More related
	to our proposed approach are \pg{} \citep{corso_particle_2023}, DiverseFlow
	\citep{morshed_diverseflow_2025}, and SPARKE \cite{jalali2026sparke}, all of which
	modify the SDE during sampling using kernels. \pg{} \citep{corso_particle_2023}
	adds a fixed potential function based on similarity kernels, DiverseFlow \citep{morshed_diverseflow_2025}
	uses determinantal point processes (DPP) as a repulsion force on the
	approximate denoised values of the particles, and SPARKE \cite{jalali2026sparke},
	which is designed to operate on semantic similar prompts simultaneously, adds a
	repulsion term based on the RKE score \cite{jalali2023information}. Unlike
	\eddy{}, all of these approaches modify the marginal distribution of particles,
	hence scarifying fidelity or quality for diversity. We note that \pg{} discussed
	the possibility of preserving the marginal distribution of particles by learning
	the potential function, but they did not follow this path in their
	experimental evaluations. Lastly, recently \cite{liu2025importance} suggested
	to use importance weighting for correcting the bias introduced by adding a
	repulsion term between the particles; however, this method requires learning
	an additional model to track the importance weight unlike \eddy{} which is training-free.

	\section{Method}
	\label{sec:method}

	Throughout this section, time $t$ is implicit unless stated otherwise. That
	includes $p = p_{t}$, $\xp{i}= \xpt{i}$, and derived quantities that depend on
	$t$. The \emph{marginal} $p_{t}(\xpt{i})$ describes the distribution of a
	single particle at time $t$. When sampling a batch of $\np$ particles jointly,
	their collective behavior is governed by a \emph{joint} distribution ${p}_{t}(\xpt
	{1}, \ldots, \xpt{\np})$. Our goal is to find a drift that (i) preserves
	$p_{t}$ for every particle individually, and (ii) shapes the joint distribution
	so that particles repel each other, increasing diversity.

	\subsection{Marginal Preserving Drift}
	\label{sec:validity} Using the previously discussed FPE symmetries, we will prove
	a general method to use them to change the joint distribution while preserving
	the marginals.
	\begin{claim}
		[Marginal preservation] \label{claim:marginal_preservation} Let $\Xt = \{\xpt
		{i}\}_{i=1}^{\np}$ be $\np$ particles, each individually evolving by drift
		$\drift_{t}$ under probability path $p_{t}$ which is always nonzero. If for each
		$i$, we add to the drift $\Apt(\Ai_{t}(\Xt))$ (w.r.t the marginal probability
		$p_{t}$ of particle $i$) for some matrix $\Ai_{t}$ that is anti-symmetric
		and Lipschitz continuous in $\Xt$, then for every particle $i$ and every time
		$t$, the marginal density of $\xpt{i}$ equals $p_{t}$. The joint distribution
		need not be preserved.
	\end{claim}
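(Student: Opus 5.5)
The plan is to reduce the joint system to a single-particle Fokker--Planck equation by conditioning on particle $i$. The added drift for particle $i$ depends on the whole configuration $\Xt$, so particle $i$ alone is not a Markov process driven by $\drift_t + \Apt(\Ai_t)$. I will therefore proceed in three steps.

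\emph{Well-posedness and the joint equation.} Since $\drift_t$ is the original drift and $\Ai_t$ is Lipschitz in $\Xt$, the joint SDE on $\R^{\dd\np}$ has a unique strong solution with independent Wiener processes $W^{i}_t$. Its joint density $\rho_t(\xp{1},\ldots,\xp{\np})$ satisfies the joint FPE
\begin{equation*}
\partial_t \rho_t = \sum_{j=1}^{\np}\Big[-\divergence_{j}\big(\rho_t(\drift_t(\xp{j}) + \psi^{j}_t)\big) + \tfrac{\vol_t^2}{2}\Delta_j \rho_t\Big],
\end{equation*}
where $\psi^{j}_t = \divergence_j \Aj[j]_t + \Aj[j]_t\score_t(\xp{j})$. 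Here $\Aj[j]$ is read as $A^{(j)}$, and $\divergence_j$ differentiates only in $\xp{j}$.

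\emph{Marginalizing.} Integrate the joint FPE over all $\xp{j}$ with $j\neq i$. Every term with $j\neq i$ is a total divergence in a variable being integrated, so it vanishes under decay assumptions at infinity. The marginal $q^{i}_t$ of $\xp{i}$ then obeys
\begin{equation*}
\partial_t q^{i}_t = -\divergence(q^{i}_t\drift_t) + \tfrac{\vol_t^2}{2}\Delta q^{i}_t - \divergence\big(q^{i}_t\,\bar\psi^{i}_t\big),
\end{equation*}
where $\bar\psi^{i}_t(x)=\E[\psi^{i}_t\mid \xpt{i}=x]$.

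\emph{Exploiting anti-symmetry.} The key observation is
\begin{equation*}
\psi^{i}_t = \frac{1}{p_t(\xp{i})}\divergence_i\big(p_t(\xp{i})A^{(i)}_t\big).
\end{equation*}
The other particles $\xp{-i}$ enter only as parameters, and the divergence is in $\xp{i}$. I will not compute $\bar\psi^{i}$ directly. Instead, I will verify that $q^{i}_t=p_t$ solves the marginal equation, by reducing to showing
\begin{equation*}
\divergence\Big(\int \rho_t\,\psi^{i}_t\, d\xp{-i}\Big)=0 .
\end{equation*}

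This is where I expect the main obstacle. Writing $\rho_t = p_t(\xp{i})\,\rho_t(\xp{-i}\mid\xp{i})$ gives
\begin{equation*}
\int\rho_t\psi^{i}_t\,d\xp{-i} = \int \rho_t(\xp{-i}\mid\xp{i})\,\divergence_i\big(p_tA^{(i)}_t\big)\,d\xp{-i}
\end{equation*}
when $q^i=p$. Taking the outer divergence and using $\divergence\divergence(pA)=0$ from Claim~\ref{claim:rot_symmetry}, a cross term remains: $\nabla_i\rho(\xp{-i}\mid\xp{i})$ contracted with $\divergence_i(pA)$. This term need not vanish when the conditional depends on $\xp{i}$. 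So the naive route does not close.

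I would first handle this by using the anti-symmetry more carefully and treating $\tilde A = \int \rho(\xp{-i}\mid\xp{i})A^{(i)}\,d\xp{-i}$, which is itself anti-symmetric in $\xp{i}$. If the cross term cannot be absorbed this way, I would fall back on an assumption implicit in the paper's intent. One option is that the drift is evaluated with the other particles treated as frozen/independent inputs, so that the conditional is $p_t$-invariant. Another is that one checks $p_t$ is a stationary solution of the perturbation equation via uniqueness of the linear FPE, given that the conditional expectation field is divergence-free against $p_t$.

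Finally, uniqueness of solutions to the linear marginal FPE with the same initial condition $p_0$ gives $q^{i}_t=p_t$ for all $t$. The positivity of $p_t$ ensures $\score_t$ and $\Apt$ are well defined. No claim is made about $\rho_t$ itself, which generally differs from $\prod_i p_t(\xp{i})$.
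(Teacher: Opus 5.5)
Your setup (joint FPE, then marginalizing to an equation for $q^i_t$ driven by $\bar\psi^i_t$) is correct, and you isolate the real obstruction. But the proposal stops there, so as written it does not prove the claim. The marginal equation is not closed in $q^i_t$: it involves $\rho_t(x^{-i}\mid x^i)$. The ansatz $q^i_t=p_t$ alone leaves the cross term $\int\nabla_i\rho_t(x^{-i}\mid x^i)\cdot\divergence_i(p_tA^{(i)}_t)\,dx^{-i}$ uncontrolled, and none of your fallbacks removes it.
\begin{itemize}
\item For $\tilde A=\int\rho_t(x^{-i}\mid x^i)A^{(i)}_t\,dx^{-i}$ one has $\divergence(p_t\tilde A)=\int\rho_t(\cdot\mid x^i)\divergence_i(p_tA^{(i)}_t)\,dx^{-i}+\int p_tA^{(i)}_t\nabla_i\rho_t(\cdot\mid x^i)\,dx^{-i}$. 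By anti-symmetry, the divergence of the last integral is exactly minus your cross term, so $\divergence\divergence(p_t\tilde A)=0$ just hands the cross term back.
\item Freezing the neighbours changes the dynamics.
\item Asking that $\bar\psi^i_t$ be divergence-free against $p_t$ is the conclusion itself.
\end{itemize}
The paper's proof has the same hole. Its Step 5 asserts that the frozen generator equals $\mathcal{L}^{\mathrm{unpert}}_{t_0}$ pointwise in $x^{-i}_{t_0}$, but $\psi^i_{t_0}\cdot\nabla f$ survives pointwise. Claim~\ref{claim:rot_symmetry} removes it only after integrating $x^i$ against $p_{t_0}$ conditionally on $x^{-i}_{t_0}$, i.e.\ under exactly the conditional independence your cross term requires.

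The missing idea is to make the ansatz for the joint rather than the marginal. Let $\pi_t=\prod_j p_t(x^j)$. Then $\pi_t\psi^i=\prod_{j\neq i}p_t(x^j)\,\divergence_i(p_tA^{(i)}_t)$, so $\divergence_i(\pi_t\psi^i)=\prod_{j\neq i}p_t(x^j)\,\divergence_i\divergence_i(p_tA^{(i)}_t)=0$. The unperturbed terms of your joint FPE reproduce $\partial_t\pi_t$, because each factor solves \eqref{eq:fpe}. Equivalently, the stacked perturbation $(\psi^1,\dots,\psi^n)$ equals $\mathcal{A}_{\pi_t}(\mathbb{A}_t)$ on $\R^{nd}$, with the anti-symmetric block-diagonal field $\mathbb{A}_t=\operatorname{diag}(A^{(1)}_t,\dots,A^{(n)}_t)$; this is just Claim~\ref{claim:rot_symmetry} in dimension $nd$. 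Suppose the particles start i.i.d.\ from $p_0$, as in Algorithm~\ref{alg:eddy} (the statement leaves this implicit). Then uniqueness for the linear joint FPE, which is standard under your Lipschitz assumptions, gives $\rho_t=\pi_t$ for all $t$. Hence $\rho_t(x^{-i}\mid x^i)$ does not depend on $x^i$, the cross term vanishes identically, and $q^i_t=p_t$.

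Two consequences are worth noting.
\begin{itemize}
\item Initial independence is genuinely needed. Take $\mu=0$, $\sigma=0$, $p=\N(0,I_2)$, $A^{(2)}=0$ and $A^{(1)}=a(x^2)J$, with $J$ the planar rotation generator. Then particle $1$ is rotated about the origin by an $x^2$-dependent angle, which distorts its marginal whenever $x^1_0$ and $x^2_0$ are correlated.
\item The fixed-time joint law is preserved too: it stays $\prod_j p_t$, so the particles at $t=1$ are i.i.d. The statement's last sentence can therefore only refer to the law of whole trajectories.
\end{itemize}
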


	If the effect on each particle was due to external forces, this would be
	trivial due to claim \ref{claim:rot_symmetry}, however, this requires a more
	delicate consideration due to the interdependent interactions between
	particles.

	\begin{proof}[\textbf{Proof sketch}]
		Fix a particle $i$ and let $p_{t}(\xpt{i})$ denote the target marginal and $\tilde
		p_{t}(\xpt{i})$ the marginal under the modified dynamics. Additionally, denote
		the neighbors of particle $i$ by $\xpnit{i}:= \{\xpt{j}\}_{j\neq i}$.

		We compare the \emph{true} dynamics to an auxiliary \emph{frozen} system in which
		$\xpnit{i}= \xpnit[t_{0}]{i}$ is kept fixed for all $t \ge t_{0}$ for some $t
		_{0}$. The particle $\xpt{i}$ evolves with drift
		$\drift_{t}(\xpt{i}) + \psii_{t}(\Xt)$, where $\psii_{t}(\Xt) := \Apt(\Ai_{t}
		(\Xt))$ (where $\Xt$ varies between systems) and the Stein operator is taken
		w.r.t.\ the marginal $p_{t}(\xpt{i})$. Since $A^{(i)}_{t}(\cdot,x_{t_0}^{-i})$
		is anti-symmetric, Claim~\ref{claim:rot_symmetry} implies that for the
		frozen system this perturbation preserves the marginal $p_{t}(\xpt{i})$ for
		any fixed $\xpnit[t_{0}]{i}$.

		We now compare the infinitesimal evolution of expectations under the true
		and frozen dynamics. Let $f:\mathbb{R}^{d}\rightarrow \mathbb{R}$ be a smooth
		test function depending only on $\xpt{i}$. Since the marginal evolution of
		$\xpt{i}$ depends directly on derivatives with respect to $\xpt{i}$, the time
		variation of $\xpnit{i}$ affects it only at second order, so the true and
		frozen dynamics induce identical first-order evolution of all test function
		expectations $\frac{d}{dt}\mathbb{E}[f(x_{t})|X_{t_0}]$, and hence the same time
		derivative $\frac{d }{dt}p_{t}(\xpt{i})$. Finally, since the change to the drift
		does not hold the symmetric form we describe in Claim 1 w.r.t the joint distribution,
		it will not in general be preserved.
	\end{proof}
	A more detailed proof is available in Appendix~\ref{app:proof}.
	\subsection{Constructing \eddyrbf{}}
	\label{sec:eddy_rbf}

	We first instantiate Claim~\ref{claim:marginal_preservation} into an exact
	algorithm with the RBF kernel. We will, in the next subsection, generalize the
	construction to arbitrary similarity kernels, but this requires approximations
	to scale to high dimensions.

	For each particle $\xp{i}$ and each neighbor $\xp{j}\neq \xp{i}$, we form the anti-symmetric
	matrix $\Aj$ as follows:
	\begin{equation}
		\Aj \;=\; \rij \otimes \vj \;-\; \vj \otimes \rij, \label{eq:eddy_aij}
	\end{equation}
	where $\otimes$ is the outer product of vectors, $\vj$ is a vector that depends
	on neighbor $j$, and $\rij = - \nabla \kij$ is the repulsive direction induced
	by a similarity kernel $k$. A simple intuition behind this construction is that
	$\rij \otimes \vj$ is a rank-one matrix whose range is the desired direction $\rij$,
	and we turn it into an anti-symmetric matrix by subtracting its transpose. The
	reason why $\vj$ depends on neighbour $j$ is to avoid additional costly terms in
	the divergence term, and to incorporate additional information regarding
	neighbour $j$. In our experiments, we found that $\vj=\muj$ (the drift) works
	well for OT-FM as well as the synthetic experiment, while $\vj = \sigma \,\score
	(\xp{j})$ works well for SDXL where $\sigma = \sigma_{t}$ is the current noise
	in the schedule.

	To account for the contribution of all particles on $\xp{i}$ we average the contribution
	over all neighbors, which yields the per-particle guidance:
	\begin{equation}
		\psii_{\eddy{}}\;=\; \frac{1}{\np-1}\sum_{j\neq i}\Ap(\Aj). \label{eq:eddy_conceptual}
	\end{equation}
	This quantity preserves every particle's marginal by Claim~\ref{claim:marginal_preservation}.

	By Definition~\ref{def:Ap_matrix}, $\Ap(\Aj) = \divergence \Aj + \Aj\,\si$
	where $\si = \score(\xp{i})$. The score $\si$ is available either directly for
	diffusion or via Tweedie's formula using Eq. ~\ref{eq:tweedie_flow} for flow matching,
	reducing the problem to evaluating the divergence:
	\begin{equation}
		\divergence \Aj \;=\; (\divergence\rij)\,\vj \;+\; \cancel{(\nabla \vj)\,\rij}
		\;-\; \cancel{(\divergence \vj)\,\rij}\;-\; (\nabla \rij)\, \vj. \label{eq:div_antisym}
	\end{equation}
	Where, the terms in Eq.~\ref{eq:div_antisym} disappear because $\vj$ does not depend
	on $\xp{i}$, so its gradient and divergence with respect to $\xp{i}$ are zero.
	The remaining terms are pure kernel quantities,
	$\divergence \rij = -\Delta k(\xp{i}, \xp{j})$ and
	$(\nabla\rij)\,\vj = -\nabla^{2} \kij\,\vj$, both \textbf{do not} require
	backpropagation through the generative model. These combine into $\divergence \Aj
	= \Kij\,\vj$, where $\Kij = \nabla^{2} \kij - \Delta \kij\,I_{\dd}$ is the
	divergence-free matrix kernel \citep{narcowich_generalized_1994, ni_representing_2025},
	classically used to represent flow and electromagnetic fields. Eq.~\eqref{eq:eddy_conceptual}
	thus simplifies to:
	\begin{equation}
		\psii_{\eddy{}}\;=\; \frac{1}{\np-1}\sum_{j\neq i}\bigl(\Aj\,\si \;+\; \Kij\,
		\vj\bigr). \label{eq:eddy_decomp}
	\end{equation}
	For the RBF kernel $\kij = \exp\bigl(-\|\xp{i}- \xp{j}\|^{2}/\gamma\bigr)$
	with bandwidth $\gamma$, all the kernel derivatives admit closed forms,
	resulting in an easy to calculate formula (see Appendix~\ref{app:eddy_rbf_cff})
	that we name \eddyrbf.

	To gain further insight about $\psii_{\eddy{}}$, we analyze the transport field
	in Eq.~\ref{eq:eddy_decomp} when $\gamma$ is small and independent of $\dd$
	and show that in high dimensions while keeping bandwidth independent of $d$ it
	is well-approximated by the divergence-free transport term alone, resulting in:
	\begin{equation}
		\psii_{\eddy{}}\;\approx\; \frac{1}{\np-1}\sum_{j\neq i}\Kij\,\vj. \label{eq:eddy_asym}
	\end{equation}
	This dominance of the divergence-free kernel term is what motivated the name
	\eddy{}: the resulting transport is eddy-like, locally divergence-free. See
	appendix~\ref{app:eddy_rbf_theory} for the proof.

	\subsection{Constructing \eddy{}}
	\label{sec:eddy_general}

	\eddyrbf{} assumes the kernel derivatives are given in closed form. In our text-to-image
	experiments (Section~\ref{sec:exp_t2i}) we use an RBF kernel applied to DINOv2
	features, so the second-order derivatives in~Eq. \eqref{eq:eddy_decomp} are no
	longer known analytically. Moreover, to exactly compute the Laplacian element
	of \eddy{} would require $\mathcal{O}(d)$ backpropagations, which is
	impractical for high-dimensional problems.

	We therefore drop the closed-form assumption and approximate both second-order
	quantities. We use approximations that require only fast kernel evaluations and
	first-order kernel gradients (as available from CLIP or DINO). We note, however,
	that the marginal is no longer preserved exactly due to the approximations
	used. Nevertheless, as demonstrated in our experiments, \eddy{} still
	outperforms competing approaches.

	For the Hessian--vector product, we use a central difference with two backward
	passes,
	\begin{equation}
		\nabla^{2}\kij\,\vj \;\approx\; \frac{\nabla k(\xpt{i}+\fdeps\vj,\xpt{j}) -
		\nabla k(\xpt{i}-\fdeps\vj,\xpt{j})}{2\fdeps}. \label{eq:hvp_fd}
	\end{equation}
	For the Laplacian, we use a forward-only Hutchinson trace estimator with
	$m = 25$ Rademacher probes
	$r_{1},\ldots,r_{m} \overset{\mathrm{iid}}{\sim}\mathrm{Rad}(\{\pm 1\}^{\dd})$,
	requiring $2m+1$ forward passes,
	\begin{equation}
		\Delta \kij \;\approx\; \frac{1}{m}\sum_{\ell=1}^{m}\frac{k(\xpt{i}+\fdeps r_{\ell},\xpt{j})
		- 2\kij + k(\xpt{i}-\fdeps r_{\ell},\xpt{j})}{\fdeps^{2}}. \label{eq:lap_hutch}
	\end{equation}
	Empirically, the guidance has its largest effect in the early, high-noise
	portion of the trajectory, so we apply $\psii_{\eddy{}}$ only for the first fraction
	$r_{\textup{stop}}= 0.2$ of sampling steps. As a result, we cut the overhead from
	applying \eddy{} fivefold with negligible loss in output quality. Algorithm~\ref{alg:eddy}
	in the appendix summarizes the full procedure. In addition, Table~\ref{tab:elapsed_time}
	in the appendix reports the average sampling time for \eddy{} compared to the
	\base{} baseline and the PG baseline with guidance applied to the the first fraction
	$0.2$ of sampling steps similarly to \eddy{}.

	\section{Experiments}

	\label{sec:experiments}

	We first show the effectiveness of our method on synthetic low-dimensional
	model, where we can verify using standard statistical tests that the marginal distribution
	is preserved. Next, we show the effectiveness of our method on modern Text-to-Image
	generation models.
	\subsection{2D Gaussian Mixture Model}
	\label{sec:exp_2d}

	\textbf{Experimental Setting. } We evaluate \eddyrbf{} on a 2-dimensional data
	model for which we can compute the diffusion process analytically.
	Specifically, we use VP-DDPM \cite{song_score-based_2021} to transport a batch
	of $5$ particles from the prior $p_{0} = \N(0,I_{2})$ to target $p_{1}$ an equal
	mixture of five Gaussians with unit variance, where the $\ell^{\text{th}}$
	Gaussian is centered at
	$5\cdot \bigl(\sin (2\pi\ell/5), \cos (2\pi\ell/5)\bigr)$.

	\textbf{Evaluation metrics. } We test the diversity and fidelity of our approach.
	We measure diversity by the expected mode coverage: how many unique modes are a
	nearest neighbor for some particle. The baseline coverage for independent (non-interacting)
	particles can be calculated directly as
	$5\bigl[1-(1-\tfrac{1}{5})^{5}\bigr] \approx 3.36$. We test fidelity by several
	statistical tests. We define two test statistics, using the distance of each sample
	to the nearest Gaussian center and its angle around it. We then perform
	several two-sample tests between \eddyrbf{} and \base{} samples. We use Kolmogorov-Smirnoff,
	Mann–Whitney and Welch t-test. We obtain independent sampling from the marginal
	distribution by choosing a single particle from each batch.

	\textbf{Experimental results. } Samples from \eddyrbf{} pass all statistical tests
	with p-values significantly higher then $0.05$ (exact numbers in table~\ref{tab:gmm}
	in the appendix). We can see in figure~\ref{fig:gmm} (left) that we indeed
	increase diversity as we increase \eddy{}'s guidance coefficient $\wg$. In
	figure~\ref{fig:gmm} (center) we show the CDF of the distance and angle to the
	nearest center for \base{} and \eddyrbf{} samples. In figure~\ref{fig:gmm} (right),
	we show samples from both distributions.

	\begin{figure}[t]
		\centering
		\includegraphics[width=\linewidth]{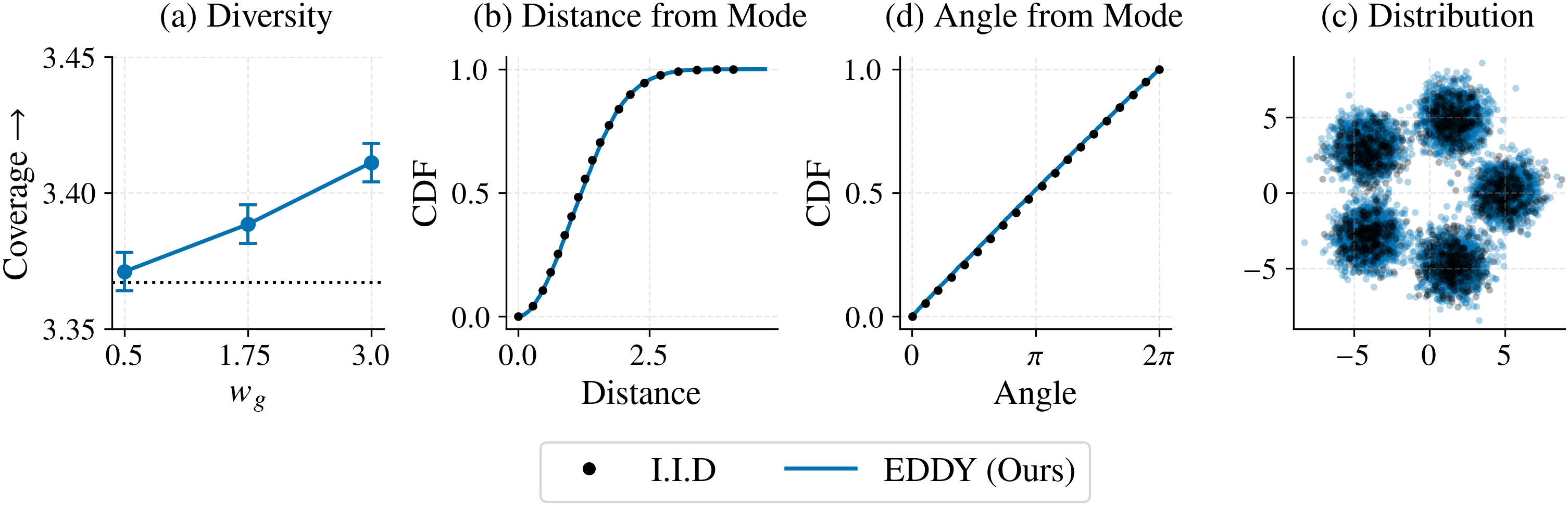}
		\caption{Mode Coverage vs.\ $\wg$}
		\caption{ (a) Mean mode coverage of \eddy{} vs guidance weight $\wg$. \base{}
		baseline (dotted line) denoting the expected coverage for independent
		particles. (b) Empirical CDF of nearest-neighbour distances for \eddy{} ($\wg
		=1.75$) versus \base{}. (c) Empirical CDF of nearest-neighbour angle for
		\eddy{} ($\wg=1.75$) versus \base{}. (d) Samples from \eddy{} and \base{}.
		distribution to \base{}. }
		\label{fig:gmm}
	\end{figure}

	\subsection{Text-to-Image Generation}
	\label{sec:exp_t2i}

	\textbf{Experimental Setting. } We compare \eddy{}, \pg{}
	\cite{corso_particle_2023} and \cads{} \cite{sadat_cads_2024} on both a flow matching
	model and a latent diffusion model. We also attempted to compare against CNO
	\cite{kim_diverse_2025}. However, since official code was unavailable at the
	time of submission, we relied on our own implementation. Despite substantial
	hyperparameter tuning, we were unable to obtain a configuration that simultaneously
	improved diversity while maintaining competitive image quality on our evaluation
	setup. We therefore exclude these results from the main comparison to avoid
	drawing potentially misleading conclusions.\\

	We test on FLUX.1-dev \cite{noauthor_flux1_2024} for flow matching and Stable
	Diffusion XL \cite{podell_sdxl_2023} (SDXL) for diffusion on 2048 randomly
	sampled validation prompts from MS-COCO \citep{lin_microsoft_2014}, generating
	4 images per caption. During inference, we use \eddy{} and \pg{} with a RBF
	kernel on features extracted by composing a fast latent decoder with DINOv2 (small)
	\cite{darcet_vision_2023} (to reduce computational overhead), which we found to
	work better than the CLIP \cite{radford_learning_2021} similarity measure. To
	ensure a fair comparison, we tune each method’s hyperparameters to operate within
	a comparable diversity regime. See details in appendix~\ref{app:exp_details}.

	\textbf{Evaluation metrics. } We measure diversity by using the DINO pairwise similarity
	score between images as was done in \cite{corso_particle_2023} with DINOv2 (large)
	\cite{darcet_vision_2023}. Quality can be divided into two components: prompt alignment
	and image quality. We measure prompt alignment via the CLIPScore \citep{hessel_clipscore_2021},
	and we measure image quality using CMMD \citep{jayasumana_rethinking_2024},
	Aesthetic score \cite{LAION-AI}, and FID \cite{heusel_gans_2017}. CMMD and FID
	are computed by independent samples from the marginal distributions, meaning we
	only use one particle per batch. To set a FID score for the baseline samples, we
	compute FID between two non-overlapping sets of samples from the base I.I.D model.

	\textbf{Experimental results.} Quantitative results for FLUX.1-dev and SDXL are
	presented in Fig. \ref{fig:pareto}, with exact values reported in Tables~\ref{tab:pareto_flux}
	and \ref{tab:pareto_sdxl} in the appendix. As shown, \eddy{} consistently
	achieves higher image quality than \pg{} at matched diversity levels, with the
	exception of FID on SDXL. While \pg{} has a better FID score on SDXL, qualitative
	inspection shows that it introduces pronounced visual artifacts, whereas \eddy{},
	at worst, exhibits blurring effects, resulting in less perceptual degradation
	overall.

	For CADS, diversity is induced by perturbing the conditioning signal. On FLUX,
	this preserves image quality but significantly degrades prompt alignment, as
	seen in the significantly lower CLIPScore (Fig. \ref{fig:pareto} (a) left). On
	SDXL, despite extensive hyperparameter tuning, CADS either yields high
	diversity with very low prompt adherence or negligible diversity gains. Consequently,
	these results are omitted from Fig. \ref{fig:pareto}, as they are not directly
	comparable. Representative samples are provided in the supplementary material.

	\begin{figure}[t]
		\centering
		\includegraphics[width=\linewidth]{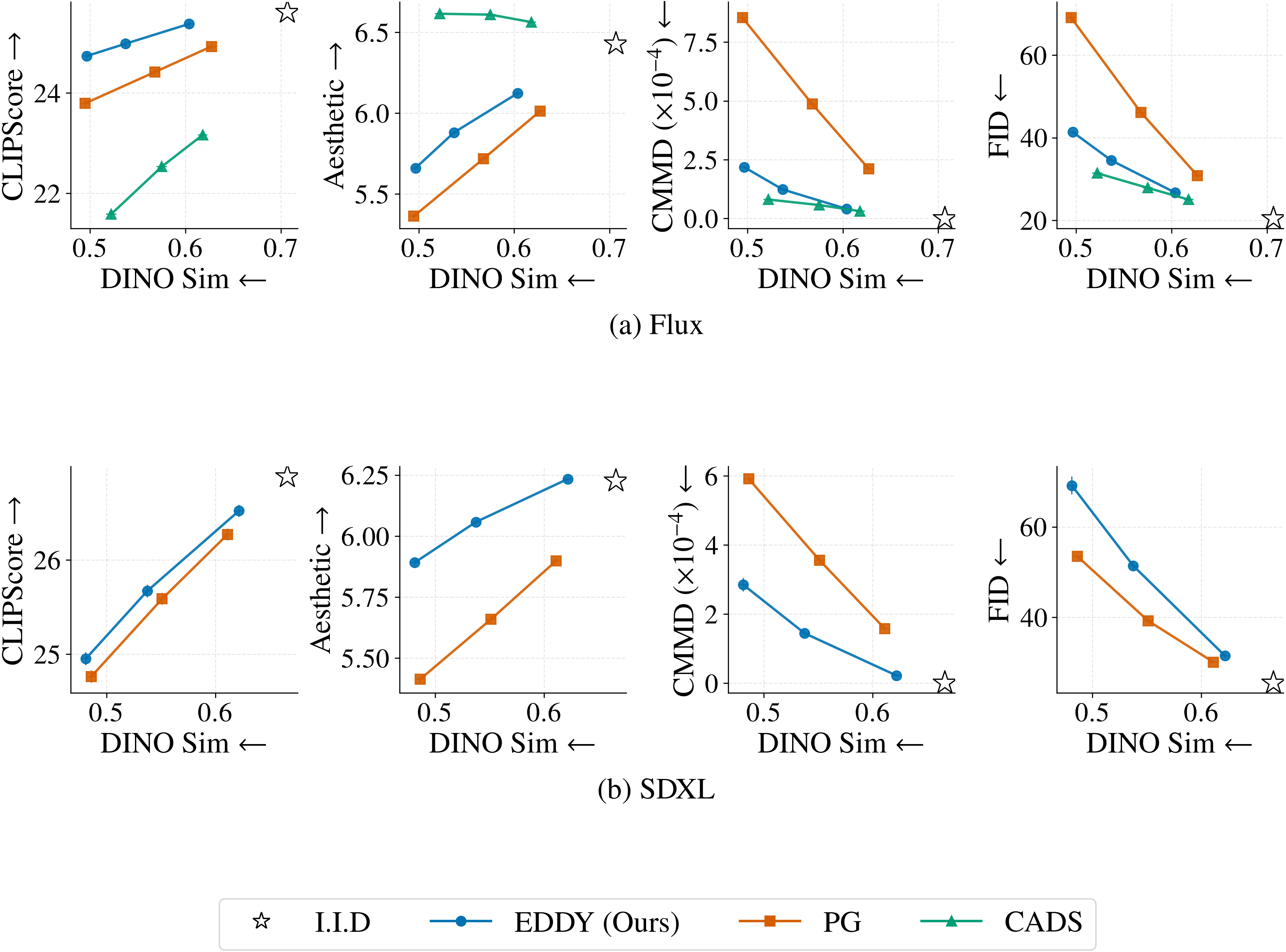}
		\caption{ Diversity--quality trade-off curves for \eddy{} and Particle Guidance
		(\pg{}) on MS-COCO. Each point corresponds to a different guidance strength;
		the white star marks the \base{} baseline. The $x$-axis measures pairwise
		DINO similarity ($\leftarrow$ lower = more diverse). \eddy{} consistently
		Pareto-dominates \pg{} across all quality metrics (CLIPScore, Aesthetic,
		CMMD, FID) at matched diversity levels. }
		\label{fig:pareto}
	\end{figure}
	\section{Limitations}
	\label{sec:limitations} Despite its strong theoretical grounding and improved diversity–fidelity
	trade-off compared to competing methods, our approach has several limitations.

	First, it incurs a higher computational cost. When comparing \eddy{} to \base{}
	and \pg{} on FLUX, \eddy{} introduces an approximate $25\%$ runtime overhead (see
	the appendix for detailed results). Second, for non-RBF kernels in high dimensions
	or when the similarity is defined on the embeddings of pre-trained networks,
	the exact kernel is expensive to compute, and we can only compute an approximation.

	In addition, empirically, we observe occasional blurriness in images generated
	with FLUX.1-dev. Such artifacts are a known issue specific to this model,
	suggesting that the observed blur may reflect a mode of diversity that remains
	consistent with the underlying data distribution. Notably, we do not observe
	similar blurriness when using SDXL; however, overall image quality is lower in
	that case, likely due to the weaker base model.
	\section{Conclusion}
	\label{sec:conclusion} We introduced EDDY, a training-free particle guidance method
	for improving diversity in diffusion and flow-matching models while preserving
	each sample's marginal distribution. EDDY is built on the symmetries of the
	Fokker-Planck equation by using anti-symmetric drift perturbations that alter
	joint particle dynamics without changing the single-particle marginal. We instantiated
	this idea through kernel-based pairwise interactions, producing divergence-free
	transport fields that introduce a repulsive force between particles. On synthetic
	data, EDDY-RBF increases mode coverage while maintaining statistical
	consistency with the target marginal. For text-to-image generation, we
	extended EDDY to perceptual feature-space kernels using finite-difference and
	Hutchinson approximations. Experiments with modern flow matching and diffusion
	models show that EDDY improves the diversity-fidelity trade-off compared to other
	baselines, with fewer artifacts at matched diversity levels. These results suggest
	that marginal-preserving particle interactions provide a principled path toward
	diverse conditional generation. In terms of societal impact, this paper aims to
	improves diversity in generative outputs which can add to their usefulness. As
	such it inherits the positive and negative impact of the underlying generative
	models.

	\newpage
	\bibliography{references}

\begin{thebibliography}{49}
\providecommand{\natexlab}[1]{#1}
\providecommand{\url}[1]{\texttt{#1}}
\expandafter\ifx\csname urlstyle\endcsname\relax
  \providecommand{\doi}[1]{doi: #1}\else
  \providecommand{\doi}{doi: \begingroup \urlstyle{rm}\Url}\fi

\bibitem[noa(2024)]{noauthor_flux1_2024}
{FLUX}.1 [dev], 2024.
\newblock URL \url{https://huggingface.co/black-forest-labs/FLUX.1-dev}.

\bibitem[Albergo and Vanden-Eijnden(2023)]{albergo2023building}
Michael~Samuel Albergo and Eric Vanden-Eijnden.
\newblock Building normalizing flows with stochastic interpolants.
\newblock In \emph{The Eleventh International Conference on Learning
  Representations}, 2023.
\newblock URL \url{https://openreview.net/forum?id=li7qeBbCR1t}.

\bibitem[Anderson(1982)]{anderson_reverse-time_1982}
Brian D.~O. Anderson.
\newblock Reverse-time diffusion equation models.
\newblock \emph{Stochastic Processes and their Applications}, 12\penalty0
  (3):\penalty0 313--326, May 1982.
\newblock ISSN 0304-4149.
\newblock \doi{10.1016/0304-4149(82)90051-5}.
\newblock URL
  \url{https://www.sciencedirect.com/science/article/pii/0304414982900515}.

\bibitem[Austin et~al.(2021)Austin, Johnson, Ho, Tarlow, and Van
  Den~Berg]{austin2021structured}
Jacob Austin, Daniel~D Johnson, Jonathan Ho, Daniel Tarlow, and Rianne Van
  Den~Berg.
\newblock Structured denoising diffusion models in discrete state-spaces.
\newblock \emph{Advances in neural information processing systems},
  34:\penalty0 17981--17993, 2021.

\bibitem[Corso et~al.(2023)Corso, St{\"a}rk, Jing, Barzilay, and
  Jaakkola]{corso2023diffdock}
Gabriele Corso, Hannes St{\"a}rk, Bowen Jing, Regina Barzilay, and Tommi~S.
  Jaakkola.
\newblock Diff{D}ock: Diffusion steps, twists, and turns for molecular docking.
\newblock In \emph{The Eleventh International Conference on Learning
  Representations}, 2023.
\newblock URL \url{https://openreview.net/forum?id=kKF8_K-mBbS}.

\bibitem[Corso et~al.(2024)Corso, Xu, Bortoli, Barzilay, and
  Jaakkola]{corso_particle_2023}
Gabriele Corso, Yilun Xu, Valentin~De Bortoli, Regina Barzilay, and Tommi~S.
  Jaakkola.
\newblock Particle guidance: non-{I.I.D.} diverse sampling with diffusion
  models.
\newblock In \emph{The Twelfth International Conference on Learning
  Representations}, 2024.
\newblock URL \url{https://openreview.net/forum?id=KqbCvIFBY7}.

\bibitem[Darcet et~al.(2024)Darcet, Oquab, Mairal, and
  Bojanowski]{darcet_vision_2023}
Timoth{\'e}e Darcet, Maxime Oquab, Julien Mairal, and Piotr Bojanowski.
\newblock Vision transformers need registers.
\newblock In \emph{The Twelfth International Conference on Learning
  Representations}, 2024.
\newblock URL \url{https://openreview.net/forum?id=2dnO3LLiJ1}.

\bibitem[Dhariwal and Nichol(2021)]{dhariwal2021diffusion}
Prafulla Dhariwal and Alexander Nichol.
\newblock Diffusion models beat {GAN}s on image synthesis.
\newblock \emph{Advances in neural information processing systems},
  34:\penalty0 8780--8794, 2021.

\bibitem[Efron(2011)]{efron_tweedies_2011}
Bradley Efron.
\newblock Tweedie's {Formula} and {Selection} {Bias}.
\newblock \emph{Journal of the American Statistical Association}, 106\penalty0
  (496):\penalty0 1602--1614, 2011.
\newblock ISSN 0162-1459.
\newblock URL \url{https://www.jstor.org/stable/23239562}.

\bibitem[Harrington et~al.(2025)Harrington, Koepke, Karthik, Darrell, and
  Efros]{harrington2025s}
Anne Harrington, A~Koepke, Shyamgopal Karthik, Trevor Darrell, and Alexei~A
  Efros.
\newblock It's never too late: Noise optimization for collapse recovery in
  trained diffusion models.
\newblock \emph{arXiv preprint arXiv:2601.00090}, 2025.

\bibitem[Hessel et~al.(2021)Hessel, Holtzman, Forbes, Le~Bras, and
  Choi]{hessel_clipscore_2021}
Jack Hessel, Ari Holtzman, Maxwell Forbes, Ronan Le~Bras, and Yejin Choi.
\newblock {CLIPScore}: {A} {Reference}-free {Evaluation} {Metric} for {Image}
  {Captioning}.
\newblock In Marie-Francine Moens, Xuanjing Huang, Lucia Specia, and Scott
  Wen-tau Yih, editors, \emph{Proceedings of the 2021 {Conference} on
  {Empirical} {Methods} in {Natural} {Language} {Processing}}, pages
  7514--7528, Online and Punta Cana, Dominican Republic, November 2021.
  Association for Computational Linguistics.
\newblock \doi{10.18653/v1/2021.emnlp-main.595}.
\newblock URL \url{https://aclanthology.org/2021.emnlp-main.595/}.

\bibitem[Heusel et~al.(2017)Heusel, Ramsauer, Unterthiner, Nessler, and
  Hochreiter]{heusel_gans_2017}
Martin Heusel, Hubert Ramsauer, Thomas Unterthiner, Bernhard Nessler, and Sepp
  Hochreiter.
\newblock {GANs} {Trained} by a {Two} {Time}-{Scale} {Update} {Rule} {Converge}
  to a {Local} {Nash} {Equilibrium}.
\newblock In \emph{Advances in {Neural} {Information} {Processing} {Systems}},
  volume~30. Curran Associates, Inc., 2017.
\newblock URL
  \url{https://proceedings.neurips.cc/paper/2017/hash/8a1d694707eb0fefe65871369074926d-Abstract.html}.

\bibitem[Ho and Salimans(2021)]{ho_classifier-free_2021}
Jonathan Ho and Tim Salimans.
\newblock Classifier-free diffusion guidance.
\newblock In \emph{NeurIPS 2021 Workshop on Deep Generative Models and
  Downstream Applications}, 2021.
\newblock URL \url{https://openreview.net/forum?id=qw8AKxfYbI}.

\bibitem[Ho et~al.(2020)Ho, Jain, and Abbeel]{ho_denoising_2020}
Jonathan Ho, Ajay Jain, and Pieter Abbeel.
\newblock Denoising diffusion probabilistic models.
\newblock \emph{Advances in Neural Information Processing Systems},
  33:\penalty0 6840--6851, 2020.

\bibitem[Hoogeboom et~al.(2022)Hoogeboom, Satorras, Vignac, and
  Welling]{hoogeboom2022equivariant}
Emiel Hoogeboom, V{\i}ctor~Garcia Satorras, Cl{\'e}ment Vignac, and Max
  Welling.
\newblock Equivariant diffusion for molecule generation in 3{D}.
\newblock In \emph{International conference on machine learning}, pages
  8867--8887. PMLR, 2022.

\bibitem[Hua et~al.(2025)Hua, Vanden-Eijnden, and
  Chen]{hua_simulation-free_2025}
Mengjian Hua, Eric Vanden-Eijnden, and Ricky~TQ Chen.
\newblock Simulation-free differential dynamics through neural conservation
  laws.
\newblock In \emph{Conference on Uncertainty in Artificial Intelligence}, pages
  1730--1744. PMLR, 2025.

\bibitem[Jalali et~al.(2023)Jalali, Li, and Farnia]{jalali2023information}
Mohammad Jalali, Cheuk~Ting Li, and Farzan Farnia.
\newblock An information-theoretic evaluation of generative models in learning
  multi-modal distributions.
\newblock \emph{Advances in Neural Information Processing Systems},
  36:\penalty0 9931--9943, 2023.

\bibitem[Jalali et~al.(2026)Jalali, LEI, Gohari, and Farnia]{jalali2026sparke}
Mohammad Jalali, Haoyu LEI, Amin Gohari, and Farzan Farnia.
\newblock {SPARKE}: Scalable prompt-aware diversity and novelty guidance in
  diffusion models via {RKE} score.
\newblock In \emph{The Thirty-ninth Annual Conference on Neural Information
  Processing Systems}, 2026.
\newblock URL \url{https://openreview.net/forum?id=1YLpf8nUIq}.

\bibitem[Jayasumana et~al.(2024)Jayasumana, Ramalingam, Veit, Glasner,
  Chakrabarti, and Kumar]{jayasumana_rethinking_2024}
Sadeep Jayasumana, Srikumar Ramalingam, Andreas Veit, Daniel Glasner, Ayan
  Chakrabarti, and Sanjiv Kumar.
\newblock Rethinking {FID}: Towards a better evaluation metric for image
  generation.
\newblock In \emph{Proceedings of the IEEE/CVF conference on computer vision
  and pattern recognition}, pages 9307--9315, 2024.

\bibitem[Jing et~al.(2022)Jing, Corso, Chang, Barzilay, and
  Jaakkola]{jing2022torsional}
Bowen Jing, Gabriele Corso, Jeffrey Chang, Regina Barzilay, and Tommi Jaakkola.
\newblock Torsional diffusion for molecular conformer generation.
\newblock \emph{Advances in neural information processing systems},
  35:\penalty0 24240--24253, 2022.

\bibitem[Kim et~al.(2026)Kim, Um, and Ye]{kim_diverse_2025}
Byungjun Kim, Soobin Um, and Jong~Chul Ye.
\newblock Diverse text-to-image generation via contrastive noise optimization.
\newblock In \emph{The Fourteenth International Conference on Learning
  Representations}, 2026.
\newblock URL \url{https://openreview.net/forum?id=EVRMnAREc3}.

\bibitem[Kirchhof et~al.(2025)Kirchhof, Thornton, B{\'e}thune, Ablin, Ndiaye,
  and Cuturi]{kirchhof2025shielded}
Michael Kirchhof, James Thornton, Louis B{\'e}thune, Pierre Ablin, Eugene
  Ndiaye, and Marco Cuturi.
\newblock Shielded diffusion: Generating novel and diverse images using sparse
  repellency.
\newblock In \emph{International Conference on Machine Learning}, pages
  30911--30942. PMLR, 2025.

\bibitem[Koulischer et~al.(2026)Koulischer, Handke, Deleu, Demeester, and
  Ambrogioni]{koulischer2026feedback}
Felix Koulischer, Florian Handke, Johannes Deleu, Thomas Demeester, and Luca
  Ambrogioni.
\newblock Feedback guidance of diffusion models.
\newblock In \emph{The Thirty-ninth Annual Conference on Neural Information
  Processing Systems}, 2026.
\newblock URL \url{https://openreview.net/forum?id=8ySOcf7UpM}.

\bibitem[Kynk{\"a}{\"a}nniemi et~al.(2024)Kynk{\"a}{\"a}nniemi, Aittala,
  Karras, Laine, Aila, and Lehtinen]{kynkaanniemi2024applying}
Tuomas Kynk{\"a}{\"a}nniemi, Miika Aittala, Tero Karras, Samuli Laine, Timo
  Aila, and Jaakko Lehtinen.
\newblock Applying guidance in a limited interval improves sample and
  distribution quality in diffusion models.
\newblock \emph{Advances in Neural Information Processing Systems},
  37:\penalty0 122458--122483, 2024.

\bibitem[{LAION-AI Team}(2022)]{LAION-AI}
{LAION-AI Team}.
\newblock Laion-aesthetics predictor v2, 2022.
\newblock URL
  \url{https://github.com/christophschuhmann/improved-aesthetic-predictor}.

\bibitem[Li et~al.(2022)Li, Thickstun, Gulrajani, Liang, and
  Hashimoto]{li2022diffusion}
Xiang Li, John Thickstun, Ishaan Gulrajani, Percy~S Liang, and Tatsunori~B
  Hashimoto.
\newblock Diffusion-{LM} improves controllable text generation.
\newblock \emph{Advances in neural information processing systems},
  35:\penalty0 4328--4343, 2022.

\bibitem[Lin et~al.(2014)Lin, Maire, Belongie, Hays, Perona, Ramanan,
  Doll{\'a}r, and Zitnick]{lin_microsoft_2014}
Tsung-Yi Lin, Michael Maire, Serge Belongie, James Hays, Pietro Perona, Deva
  Ramanan, Piotr Doll{\'a}r, and C~Lawrence Zitnick.
\newblock Microsoft {COCO}: Common objects in context.
\newblock In \emph{European conference on computer vision}, pages 740--755.
  Springer, 2014.

\bibitem[Lipman et~al.(2023)Lipman, Chen, Ben-Hamu, Nickel, and
  Le]{lipman_flow_2022}
Yaron Lipman, Ricky T.~Q. Chen, Heli Ben-Hamu, Maximilian Nickel, and Matthew
  Le.
\newblock Flow matching for generative modeling.
\newblock In \emph{The Eleventh International Conference on Learning
  Representations}, 2023.
\newblock URL \url{https://openreview.net/forum?id=PqvMRDCJT9t}.

\bibitem[Liu et~al.(2023)Liu, Gong, and qiang liu]{liu_flow_2022}
Xingchao Liu, Chengyue Gong, and qiang liu.
\newblock Flow straight and fast: Learning to generate and transfer data with
  rectified flow.
\newblock In \emph{The Eleventh International Conference on Learning
  Representations}, 2023.
\newblock URL \url{https://openreview.net/forum?id=XVjTT1nw5z}.

\bibitem[Liu et~al.(2025)Liu, Li, Wei, and Nguyen]{liu2025importance}
Xinshuang Liu, Runfa~Blark Li, Shaoxiu Wei, and Truong Nguyen.
\newblock Importance-weighted non-{IID} sampling for flow matching models.
\newblock \emph{arXiv preprint arXiv:2511.17812}, 2025.

\bibitem[Lou et~al.(2024)Lou, Meng, and Ermon]{lou2024discrete}
Aaron Lou, Chenlin Meng, and Stefano Ermon.
\newblock Discrete diffusion modeling by estimating the ratios of the data
  distribution.
\newblock In \emph{Forty-first International Conference on Machine Learning},
  2024.
\newblock URL \url{https://openreview.net/forum?id=CNicRIVIPA}.

\bibitem[Lu et~al.(2024)Lu, Teehan, and Ren]{lu2024procreate}
Jack Lu, Ryan Teehan, and Mengye Ren.
\newblock Pro{C}reate, don’t reproduce! propulsive energy diffusion for
  creative generation.
\newblock In \emph{European Conference on Computer Vision}, pages 397--414.
  Springer, 2024.

\bibitem[Morshed and Boddeti(2025)]{morshed_diverseflow_2025}
Mashrur~M Morshed and Vishnu Boddeti.
\newblock {DiverseFlow}: {Sample}-efficient diverse mode coverage in flows.
\newblock In \emph{Proceedings of the {Computer} {Vision} and {Pattern}
  {Recognition} {Conference}}, pages 23303--23312, 2025.

\bibitem[Narcowich and Ward(1994)]{narcowich_generalized_1994}
Francis~J. Narcowich and Joseph~D. Ward.
\newblock Generalized {Hermite} interpolation via matrix-valued conditionally
  positive definite functions.
\newblock \emph{Mathematics of Computation}, 63\penalty0 (208):\penalty0
  661--661, January 1994.
\newblock ISSN 0025-5718.
\newblock \doi{10.1090/S0025-5718-1994-1254147-6}.
\newblock URL
  \url{https://www.ams.org/mcom/1994-63-208/S0025-5718-1994-1254147-6/}.

\bibitem[Ni et~al.(2025)Ni, Xing, Li, Wang, and Chen]{ni_representing_2025}
Xingyu Ni, Jingrui Xing, Xingqiao Li, Bin Wang, and Baoquan Chen.
\newblock Representing flow fields with divergence-free kernels for
  reconstruction.
\newblock \emph{Proceedings of the ACM on Computer Graphics and Interactive
  Techniques}, 8\penalty0 (4):\penalty0 1--21, 2025.

\bibitem[Oord et~al.(2018)Oord, Li, and Vinyals]{oord2018representation}
Aaron van~den Oord, Yazhe Li, and Oriol Vinyals.
\newblock Representation learning with contrastive predictive coding.
\newblock \emph{arXiv preprint arXiv:1807.03748}, 2018.

\bibitem[Pachpatte et~al.(1998)]{pachpatte1998inequalities}
Baburao~G Pachpatte et~al.
\newblock Inequalities for differential and integral equations.
\newblock Technical report, Academic press, 1998.

\bibitem[Parmar et~al.(2026)Parmar, Patashnik, Ostashev, Wang, Aberman,
  Narasimhan, and Zhu]{parmar2026scaling}
Gaurav Parmar, Or~Patashnik, Daniil Ostashev, Kuan-Chieh~Jackson Wang, Kfir
  Aberman, Srinivasa Narasimhan, and Jun-Yan Zhu.
\newblock Scaling group inference for diverse and high-quality generation.
\newblock In \emph{The Fourteenth International Conference on Learning
  Representations}, 2026.
\newblock URL \url{https://openreview.net/forum?id=IyTNxjTuWT}.

\bibitem[Podell et~al.(2024)Podell, English, Lacey, Blattmann, Dockhorn,
  M{\"u}ller, Penna, and Rombach]{podell_sdxl_2023}
Dustin Podell, Zion English, Kyle Lacey, Andreas Blattmann, Tim Dockhorn, Jonas
  M{\"u}ller, Joe Penna, and Robin Rombach.
\newblock {SDXL}: Improving latent diffusion models for high-resolution image
  synthesis.
\newblock In \emph{The Twelfth International Conference on Learning
  Representations}, 2024.
\newblock URL \url{https://openreview.net/forum?id=di52zR8xgf}.

\bibitem[Radford et~al.(2021)Radford, Kim, Hallacy, Ramesh, Goh, Agarwal,
  Sastry, Askell, Mishkin, Clark, et~al.]{radford_learning_2021}
Alec Radford, Jong~Wook Kim, Chris Hallacy, Aditya Ramesh, Gabriel Goh,
  Sandhini Agarwal, Girish Sastry, Amanda Askell, Pamela Mishkin, Jack Clark,
  et~al.
\newblock Learning transferable visual models from natural language
  supervision.
\newblock In \emph{International conference on machine learning}, pages
  8748--8763. PmLR, 2021.

\bibitem[Richter-Powell et~al.(2022)Richter-Powell, Lipman, and
  Chen]{richter-powell_neural_2022}
Jack Richter-Powell, Yaron Lipman, and Ricky T.~Q. Chen.
\newblock Neural {Conservation} {Laws}: {A} {Divergence}-{Free} {Perspective}.
\newblock \emph{Advances in Neural Information Processing Systems},
  35:\penalty0 38075--38088, December 2022.
\newblock URL
  \url{https://proceedings.neurips.cc/paper_files/paper/2022/hash/f8d39584f87944e5dbe46ec76f19e20a-Abstract-Conference.html}.

\bibitem[Rombach et~al.(2022)Rombach, Blattmann, Lorenz, Esser, and
  Ommer]{rombach2022high}
Robin Rombach, Andreas Blattmann, Dominik Lorenz, Patrick Esser, and Bj{\"o}rn
  Ommer.
\newblock High-resolution image synthesis with latent diffusion models.
\newblock In \emph{Proceedings of the IEEE/CVF conference on computer vision
  and pattern recognition}, pages 10684--10695, 2022.

\bibitem[Sadat et~al.(2024)Sadat, Buhmann, Bradley, Hilliges, and
  Weber]{sadat_cads_2024}
Seyedmorteza Sadat, Jakob Buhmann, Derek Bradley, Otmar Hilliges, and Romann~M.
  Weber.
\newblock {CADS}: Unleashing the diversity of diffusion models through
  condition-annealed sampling.
\newblock In \emph{The Twelfth International Conference on Learning
  Representations}, 2024.
\newblock URL \url{https://openreview.net/forum?id=zMoNrajk2X}.

\bibitem[Sahoo et~al.(2024)Sahoo, Arriola, Schiff, Gokaslan, Marroquin, Chiu,
  Rush, and Kuleshov]{sahoo2024simple}
Subham~S Sahoo, Marianne Arriola, Yair Schiff, Aaron Gokaslan, Edgar Marroquin,
  Justin~T Chiu, Alexander Rush, and Volodymyr Kuleshov.
\newblock Simple and effective masked diffusion language models.
\newblock \emph{Advances in Neural Information Processing Systems},
  37:\penalty0 130136--130184, 2024.

\bibitem[Sohl-Dickstein et~al.(2015)Sohl-Dickstein, Weiss, Maheswaranathan, and
  Ganguli]{sohl2015deep}
Jascha Sohl-Dickstein, Eric Weiss, Niru Maheswaranathan, and Surya Ganguli.
\newblock Deep unsupervised learning using nonequilibrium thermodynamics.
\newblock In \emph{International conference on machine learning}, pages
  2256--2265. pmlr, 2015.

\bibitem[Song et~al.(2021{\natexlab{a}})Song, Meng, and
  Ermon]{song2021denoising}
Jiaming Song, Chenlin Meng, and Stefano Ermon.
\newblock Denoising diffusion implicit models.
\newblock In \emph{International Conference on Learning Representations},
  2021{\natexlab{a}}.
\newblock URL \url{https://openreview.net/forum?id=St1giarCHLP}.

\bibitem[Song et~al.(2021{\natexlab{b}})Song, Sohl-Dickstein, Kingma, Kumar,
  Ermon, and Poole]{song_score-based_2021}
Yang Song, Jascha Sohl-Dickstein, Diederik~P. Kingma, Abhishek Kumar, Stefano
  Ermon, and Ben Poole.
\newblock Score-{Based} {Generative} {Modeling} through {Stochastic}
  {Differential} {Equations}.
\newblock In \emph{9th {International} {Conference} on {Learning}
  {Representations} {ICLR}}. OpenReview.net, 2021{\natexlab{b}}.

\bibitem[Stein(1972)]{stein_bound_1972}
Charles Stein.
\newblock A bound for the error in the normal approximation to the distribution
  of a sum of dependent random variables.
\newblock In \emph{Proceedings of the {Sixth} {Berkeley} {Symposium} on
  {Mathematical} {Statistics} and {Probability}, {Volume} 2: {Probability}
  {Theory}}, volume 6.2, pages 583--603. University of California Press,
  January 1972.

\bibitem[Xu et~al.(2022)Xu, Yu, Song, Shi, Ermon, and Tang]{xu2022geodiff}
Minkai Xu, Lantao Yu, Yang Song, Chence Shi, Stefano Ermon, and Jian Tang.
\newblock Geo{D}iff: A geometric diffusion model for molecular conformation
  generation.
\newblock In \emph{International Conference on Learning Representations}, 2022.
\newblock URL \url{https://openreview.net/forum?id=PzcvxEMzvQC}.

\end{thebibliography}
	\bibliographystyle{plainnat}

	\newpage
	\appendix
	\section{Proof of Claim 2}
	\label{app:proof}

	\subsection*{Setup}
	Consider the modified (\emph{true}) dynamics
	\begin{equation}
		\label{eq:true-dyn}dx^{i}_{t} = \bigl[\mu_{t}(x^{i}_{t}) + \psi^{i}_{t}(x^{i}
		_{t}, x^{-i}_{t})\bigr]\,dt + \sigma_{t}\,dW^{i}_{t},
	\end{equation}
	where $\psi^{i}_{t}(x^{i}_{t},x^{-i}_{t}) := \mathcal{A}_{p_t}\!\bigl(A^{(i)}_{t}
	(x^{i}_{t},x^{-i}_{t})\bigr)$, with the Stein operator taken with respect to
	the marginal $p_{t}(x^{i}_{t})$. Here $x^{-i}_{t} := \{x^{j}_{t}\}_{j\neq i}$ denotes
	all particles other than the $i$-th. We assume $\psi^{i}_{t}$ is Lipschitz in
	all its arguments, uniformly in~$t$. Let $f\in C^{\infty}_{c}(\mathbb{R}^{d})$
	be a test function depending only on~$x^{i}_{t}$.

	Denote by $\tilde p_{t}(x^{i})$ the marginal density of particle~$i$ under the
	modified dynamics. We wish to show $\tilde p_{t}(x^{i}) = p_{t}(x^{i})$ for all
	$t\in[0,1]$.

	\subsection*{Frozen reference system and coupling}
	Fix $t_{0}\in[0,1)$ and a realization $X_{t_0}=\{x^{j}_{t_0}\}_{j=1}^{n}$.
	Define the \emph{frozen dynamics} as the system in which, starting from~$X_{t_0}$,
	the neighbors $x^{-i}$ are held fixed at $x^{-i}_{t_0}$ for all $t\ge t_{0}$,
	while particle~$i$ evolves via
	\begin{equation}
		\label{eq:frozen-dyn}dx^{i,\mathrm{fr}}_{t} = \bigl[\mu_{t}(x^{i,\mathrm{fr}}
		_{t}) + \psi^{i}_{t}(x^{i,\mathrm{fr}}_{t},\, x^{-i}_{t_0})\bigr]\,dt + \sigma
		_{t}\,dW^{i}_{t}.
	\end{equation}
	Since $A^{(i)}_{t}(\,\cdot\,,x^{-i}_{t_0})$ is anti-symmetric in~$x^{i}$ for
	any fixed~$x^{-i}_{t_0}$, Claim~1 applies directly: the frozen dynamics
	preserves the marginal $p_{t}(x^{i}_{t})$. This holds for \emph{every} choice
	of the frozen values $x^{-i}_{t_0}$.

	\paragraph{Coupling construction.}
	To compare the true and frozen dynamics pathwise, we construct both systems on
	the \emph{same} probability space, driven by the \emph{same} Brownian motion~$W
	^{i}$. Concretely, we couple the true particle~$x^{i}_{s}$ (Eq.~\ref{eq:true-dyn})
	and the frozen particle~$x^{i,\mathrm{fr}}_{s}$ (Eq.~\ref{eq:frozen-dyn}) by feeding
	both SDEs the identical realization of~$W^{i}$, starting from the same initial
	condition~$x^{i}_{t_0}=x^{i,\mathrm{fr}}_{t_0}$. Under this coupling, the
	diffusion terms $\sigma_{t}\,dW^{i}_{t}$ cancel exactly when we subtract the
	two SDEs, so the difference process $e_{s}:=x^{i}_{s} - x^{i,\mathrm{fr}}_{s}$
	satisfies a \emph{deterministic} integral equation (given the realization of~$W
	^{i}$ and the neighbor trajectories), enabling the Gr\"onwall comparison in Step~3
	below. Note that the coupling is used solely as an analysis device to bound
	the difference of expectations; it does not change the marginal law of either system.

	\subsection*{Step 1: Integral form (true dynamics)}
	Applying It\^o's formula to $f(x^{i}_{t_0+\Delta t})$ on the interval $[t_{0},\,
	t_{0}+\Delta t]$ and taking the conditional expectation given $X_{t_0}$, the
	stochastic integral has zero mean, yielding
	\begin{equation}
		\label{eq:ito-true}\mathbb{E}^{\mathrm{true}}\!\bigl[f(x^{i}_{t_0+\Delta t})
		\mid X_{t_0}\bigr] = f(x^{i}_{t_0}) + \mathbb{E}\!\left[ \int_{t_0}^{t_0+\Delta
		t}\!G^{\mathrm{true}}(s)\,ds \;\middle|\; X_{t_0}\right],
	\end{equation}
	where
	\[
		G^{\mathrm{true}}(s) := \bigl[\mu_{t}(x^{i}_{s})+\psi^{i}_{t}(x^{i}_{s},x^{-i}
		_{s})\bigr] \cdot\nabla f(x^{i}_{s}) + \tfrac{\sigma_t^2}{2}\,\Delta f(x^{i}_{s}
		).
	\]

	\subsection*{Step 2: Integral form (frozen dynamics)}
	Similarly, for the frozen system (with $x^{-i}_{s}\equiv x^{-i}_{t_0}$):
	\begin{equation}
		\label{eq:ito-frozen}\mathbb{E}^{\mathrm{fr}}\!\bigl[f(x^{i,\mathrm{fr}}_{t_0+\Delta
		t}) \mid X_{t_0}\bigr] = f(x^{i}_{t_0}) + \mathbb{E}\!\left[ \int_{t_0}^{t_0+\Delta
		t}\!G^{\mathrm{fr}}(s)\,ds \;\middle|\; X_{t_0}\right],
	\end{equation}
	where
	\[
		G^{\mathrm{fr}}(s) := \bigl[\mu_{t}(x^{i,\mathrm{fr}}_{s}) + \psi^{i}_{t}(x^{i,\mathrm{fr}}
		_{s},\,x^{-i}_{t_0})\bigr] \cdot\nabla f(x^{i,\mathrm{fr}}_{s}) + \tfrac{\sigma_t^2}
		{2}\,\Delta f(x^{i,\mathrm{fr}}_{s}).
	\]

	\subsection*{Step 3: Comparison}
	At $s=t_{0}$ both systems share the same initial state~$X_{t_0}$, so
	$G^{\mathrm{true}}(t_{0})=G^{\mathrm{fr}}(t_{0})$. For $s\in(t_{0},\,t_{0}+\Delta
	t]$ the two integrands differ for two reasons.

	\paragraph{(a) Neighbor displacement.}
	In the true system the neighbors evolve: $\mathbb{E}[\|x^{-i}_{s} - x^{-i}_{t_0}
	\| \,|X_{t_0}] = \mathcal{O}(\sqrt{s-t_{0}})$ by standard SDE moment estimates.
	In the frozen system $x^{-i}_{s}\equiv x^{-i}_{t_0}$, so the expected discrepancy
	is~$\mathcal{O}(\sqrt{s-t_{0}})$.

	\paragraph{(b) Particle-$i$ trajectory divergence.}
	Under the coupling constructed above, both $x^{i}_{s}$ (true) and $x^{i,\mathrm{fr}}
	_{s}$ (frozen) start from the same~$x^{i}_{t_0}$ and are driven by the \emph{same}
	realization of~$W^{i}$, but with different drifts. Because the shared Brownian
	motion cancels upon subtraction, the error process is governed by a deterministic
	integral equation that we bound via a Gr\"onwall argument.

	\medskip
	\noindent
	\textbf{Gr\"onwall argument.}\; Write $e_{s} := x^{i}_{s} - x^{i,\mathrm{fr}}_{s}$
	for the error process. Subtracting the two SDEs and using the fact that both
	are driven by the same~$W^{i}$:
	\begin{equation}
		\label{eq:error-ode}de_{s} = \Bigl[ \bigl(\mu_{t}(x^{i}_{s})-\mu_{t}(x^{i,\mathrm{fr}}
		_{s})\bigr) + \bigl(\psi^{i}_{t}(x^{i}_{s},x^{-i}_{s}) -\psi^{i}_{t}(x^{i,\mathrm{fr}}
		_{s},x^{-i}_{t_0})\bigr) \Bigr]\,ds.
	\end{equation}
	We split the $\psi^{i}$ difference by adding and subtracting $\psi^{i}_{t}(x^{i,\mathrm{fr}}
	_{s},x^{-i}_{s})$:
	\begin{equation}
		\label{eq:psi-split}\psi^{i}_{t}(x^{i}_{s},x^{-i}_{s}) - \psi^{i}_{t}(x^{i,\mathrm{fr}}
		_{s},x^{-i}_{t_0}) = \underbrace{ \psi^i_t(x^i_s,x^{-i}_s) - \psi^i_t(x^{i,\mathrm{fr}}_s,x^{-i}_s)
		}_{\text{(I): sensitivity to }x^i}+ \underbrace{ \psi^i_t(x^{i,\mathrm{fr}}_s,x^{-i}_s)
		- \psi^i_t(x^{i,\mathrm{fr}}_s,x^{-i}_{t_0}) }_{\text{(II): sensitivity to }x^{-i}}
		.
	\end{equation}
	By the Lipschitz assumption on~$\psi^{i}_{t}$ and~$\mu_{t}$ (with constants $L_{\psi}$
	and~$L_{\mu}$, respectively):
	\begin{itemize}
		\item Term~(I) plus the $\mu$ difference:
			$\bigl\|\mu_{t}(x^{i}_{s})-\mu_{t}(x^{i,\mathrm{fr}}_{s}) + \text{(I)}\bigr
			\| \le (L_{\mu}+L_{\psi})\,\|e_{s}\|$.

		\item Term~(II):
			$\|\text{(II)}\|\le L_{\psi}\,\|x^{-i}_{s} - x^{-i}_{t_0}\|$.
	\end{itemize}
	Taking norms and using $e_{t_0}=0$:
	\begin{equation}
		\label{eq:integral-ineq}\|e_{s}\| \;\le\; \int_{t_0}^{s}\bigl[(L_{\mu}+L_{\psi}
		)\,\|e_{r}\| + L_{\psi}\,\|x^{-i}_{r} - x^{-i}_{t_0}\|\bigr]\,dr.
	\end{equation}
	By standard SDE moment estimates for the neighbor particles (whose drifts and
	volatilities are bounded on compact time intervals),
	\begin{equation}
		\label{eq:neighbor-rate}\mathbb{E}\bigl[\|x^{-i}_{r} - x^{-i}_{t_0}\| \mid X_{t_0}
		\bigr] \le C_{1}\sqrt{r-t_{0}}
	\end{equation}
	for some constant $C_{1}>0$. Setting $\varphi(s):=\mathbb{E}[\|e_{s}\|\mid X_{t_0}
	]$ and $L:=L_{\mu}+L_{\psi}$, we obtain the integral inequality
	\begin{equation}
		\label{eq:phi-ineq}\varphi(s) \;\le\; \int_{t_0}^{s}\bigl[L\,\varphi(r) + L_{\psi}
		C_{1}\,\sqrt{r-t_{0}}\bigr]\,dr, \qquad \varphi(t_{0})=0.
	\end{equation}
	This has the standard form
	$\varphi(s)\le\int_{t_0}^{s}[L\,\varphi(r)+h(r)]\,dr$ with forcing
	$h(r)=L_{\psi} C_{1}\sqrt{r-t_{0}}$. Gr\"onwall's inequality (with forcing; see,
	e.g.,~\cite{pachpatte1998inequalities}) gives
	\begin{equation}
		\label{eq:gronwall-bound}\varphi(s) \;\le\; \int_{t_0}^{s}h(r)\,e^{L(s-r)}\,d
		r \;=\; L_{\psi} C_{1} \int_{t_0}^{s}\sqrt{r-t_{0}}\,e^{L(s-r)}\,dr.
	\end{equation}
	For $s-t_{0}\le\Delta t$ small, the exponential satisfies
	$e^{L(s-r)}\le e^{L\Delta t}=1+\mathcal{O}(\Delta t)$, so
	\begin{equation}
		\label{eq:error-rate}\varphi(s) \;\le\; L_{\psi} C_{1}\bigl(1+\mathcal{O}(\Delta
		t)\bigr) \int_{t_0}^{s}\sqrt{r-t_{0}}\,dr \;=\; \frac{2L_{\psi} C_{1}}{3}\bigl
		(1+\mathcal{O}(\Delta t)\bigr)\,(s-t_{0})^{3/2}.
	\end{equation}
	Therefore
	\begin{equation}
		\label{eq:error-summary}\mathbb{E}\bigl[\|x^{i}_{s} - x^{i,\mathrm{fr}}_{s}\|
		\mid X_{t_0}\bigr] = \mathcal{O}\bigl((s-t_{0})^{3/2}\bigr).
	\end{equation}
	The intuition is that the only source driving the two particle-$i$ trajectories
	apart is the neighbor displacement, which grows at rate~$\mathcal{O}(\sqrt{s-t_{0}}
	)$. This acts as a forcing term in the error ODE~\eqref{eq:error-ode}, so the error
	itself is one order smaller, i.e. $\mathcal{O}((s-t_{0})^{3/2})$, after
	integration.

	\textbf{Combining both sources.}\; Since $G$ depends on the particle position
	and the neighbor positions through Lipschitz functions, and since $f$ and its derivatives
	are bounded ($f\in C^{\infty}_{c}$),
	\begin{equation}
		\label{eq:G-bound}\bigl|G^{\mathrm{true}}(s) - G^{\mathrm{fr}}(s)\bigr| \;\le
		\; L_{1}\,\|x^{-i}_{s} - x^{-i}_{t_0}\| + L_{2}\,\|x^{i}_{s} - x^{i,\mathrm{fr}}
		_{s}\|
	\end{equation}
	for constants $L_{1},L_{2}$ depending on the Lipschitz constants of~$\psi^{i}$,~$\mu$,
	and the supremum norms of $\nabla f$,~$\nabla^{2} f$. Taking expectations and
	integrating:
	\begin{equation}
		\label{eq:G-integral}\mathbb{E}\!\left[ \int_{t_0}^{t_0+\Delta t}\bigl|G^{\mathrm{true}}
		(s)-G^{\mathrm{fr}}(s)\bigr|\,ds \;\middle|\; X_{t_0}\right] \le \int_{t_0}^{t_0+\Delta
		t}\bigl[L_{1}\cdot \mathcal{O}(\sqrt{s-t_{0}})+L_{2}\cdot \mathcal{O}((s-t_{0}
		)^{3/2})\bigr]\,ds = \mathcal{O}(\Delta t^{3/2}).
	\end{equation}
	Therefore
	\begin{equation}
		\label{eq:comparison-result}\mathbb{E}^{\mathrm{true}}\!\bigl[f(x^{i}_{t_0+\Delta
		t}) \mid X_{t_0}\bigr] - \mathbb{E}^{\mathrm{fr}}\!\bigl[f(x^{i,\mathrm{fr}}_{t_0+\Delta
		t}) \mid X_{t_0}\bigr] = \mathcal{O}(\Delta t^{3/2}).
	\end{equation}

	\subsection*{Step 4: Infinitesimal generator identity}
	Dividing by $\Delta t$ and sending $\Delta t\to 0$:
	\begin{equation}
		\label{eq:generator-id}\frac{d}{dt}\mathbb{E}^{\mathrm{true}}\!\bigl[f(x^{i}_{t}
		)\mid X_{t_0}\bigr] \bigg|_{t=t_0}= \frac{d}{dt}\mathbb{E}^{\mathrm{fr}}\!\bigl
		[f(x^{i,\mathrm{fr}}_{t}) \mid X_{t_0}\bigr] \bigg|_{t=t_0}.
	\end{equation}
	This holds for every realization~$X_{t_0}$ and every test function $f\in C^{\infty}
	_{c}(\mathbb{R}^{d})$.

	\subsection*{Step 5: Marginal evolution by integration}
	We now derive the evolution of the unconditional marginal $\tilde p_{t}(x^{i})$.
	Fix~$t_{0}$ and suppose, as an inductive hypothesis, that
	$\tilde p_{t_0}(x^{i}) = p_{t_0}(x^{i})$.

	The unconditional rate of change of $\mathbb{E}^{\mathrm{true}}[f(x^{i}_{t})]$
	is obtained by integrating the conditional rate over $X_{t_0}$ drawn from the
	\emph{joint} distribution $\tilde\pi_{t_0}(x^{i},x^{-i})$ under the modified dynamics:
	\begin{equation}
		\label{eq:uncond-rate}\frac{d}{dt}\mathbb{E}^{\mathrm{true}}[f(x^{i}_{t})] \bigg
		|_{t=t_0}= \int \frac{d}{dt}\mathbb{E}^{\mathrm{true}}\!\bigl[f(x^{i}_{t})\mid
		X_{t_0}\bigr] \bigg|_{t=t_0}\,d\tilde\pi_{t_0}(X_{t_0}).
	\end{equation}
	By Step~4, we may replace the true conditional rate with the frozen one:
	\begin{equation}
		\label{eq:replace-frozen}= \int \frac{d}{dt}\mathbb{E}^{\mathrm{fr}}\!\bigl[f
		(x^{i,\mathrm{fr}}_{t}) \mid X_{t_0}\bigr] \bigg|_{t=t_0}\,d\tilde\pi_{t_0}(X
		_{t_0}).
	\end{equation}
	Now, the frozen generator for particle~$i$, given fixed neighbors
	$x^{-i}_{t_0}$, is the generator of a one-particle SDE with drift $\mu_{t} + \psi
	^{i}_{t}(\cdot,x^{-i}_{t_0})$. By Claim~1, this generator produces the same marginal
	evolution as the unperturbed drift~$\mu_{t}$ for \emph{any} fixed value of~$x^{-i}
	_{t_0}$. That is, the frozen generator acting on the marginal of~$x^{i}$ gives
	\begin{equation}
		\label{eq:frozen-equals-unpert}\frac{d}{dt}\mathbb{E}^{\mathrm{fr}}\!\bigl[f(
		x^{i,\mathrm{fr}}_{t}) \mid x^{i}_{t_0},\,x^{-i}_{t_0}\bigr] \bigg|_{t=t_0}=
		\mathcal{L}^{\mathrm{unpert}}_{t_0}f(x^{i}_{t_0}),
	\end{equation}
	where $\mathcal{L}^{\mathrm{unpert}}_{t_0}$ is the generator of the
	unperturbed dynamics, depending only on~$x^{i}_{t_0}$. Crucially, the right-hand
	side does not depend on~$x^{-i}_{t_0}$.

	Therefore, integrating over the joint:
	\begin{equation}
		\label{eq:marginalize}\frac{d}{dt}\mathbb{E}^{\mathrm{true}}[f(x^{i}_{t})] \bigg
		|_{t=t_0}= \int \mathcal{L}^{\mathrm{unpert}}_{t_0}f(x^{i}_{t_0}) \,d\tilde\pi
		_{t_0}(X_{t_0}) = \int \mathcal{L}^{\mathrm{unpert}}_{t_0}f(x^{i}) \,\tilde p
		_{t_0}(x^{i})\,dx^{i},
	\end{equation}
	where the second equality follows because $\mathcal{L}^{\mathrm{unpert}}_{t_0}f
	(x^{i})$ depends only on~$x^{i}$, so integrating out~$x^{-i}$ from the joint~$\tilde
	\pi_{t_0}$
	yields the marginal $\tilde p_{t_0}(x^{i})$.

	By the inductive hypothesis $\tilde p_{t_0}=p_{t_0}$, this equals
	\begin{equation}
		\label{eq:inductive-sub}= \int \mathcal{L}^{\mathrm{unpert}}_{t_0}f(x^{i}) \,
		p_{t_0}(x^{i})\,dx^{i},
	\end{equation}
	which is exactly the rate of change of $\mathbb{E}[f(x^{i}_{t})]$ under the unperturbed
	dynamics at~$t_{0}$. In weak form, $\tilde p_{t}$ and~$p_{t}$ satisfy the same
	Fokker--Planck equation:
	\begin{equation}
		\label{eq:fpe-weak}\partial_{t} p = -\nabla\cdot(\mu_{t}\,p) + \tfrac{\sigma_t^2}
		{2}\,\Delta p.
	\end{equation}

	\subsection*{Step 6: Conclusion by continuous induction}
	At $t=0$ the marginals agree: $\tilde p_{0}(x^{i})=p_{0}(x^{i})=\mathcal{N}(0,I
	_{d})$ by assumption. Steps~4--5 show that whenever $\tilde p_{t_0}=p_{t_0}$,
	both marginals satisfy the same FPE at~$t_{0}$ with the same initial datum. Under
	standard regularity (the drift~$\mu_{t}$ is locally Lipschitz and has at most
	linear growth, and~$\sigma_{t}$ is bounded), the FPE has a unique weak solution.
	By uniqueness:
	\begin{equation}
		\label{eq:conclusion}\tilde p_{t}(x^{i}) = p_{t}(x^{i}) \quad\text{for all}\;
		t\in[0,1].
	\end{equation}

	Since the perturbation $\psi^{i}_{t}$ depends on $x^{-i}_{t}$ and does not
	take the anti-symmetric form of Claim~1 with respect to the \emph{joint} distribution,
	the joint $\tilde\pi_{t}(x^{1},\dotsc,x^{n})$ is not preserved in general. \qed

	\section{\eddyrbf{} Closed-Form Formula}
	\label{app:eddy_rbf_cff}

	Define $\dij = \xp{i}- \xp{j}$. For the RBF kernel
	$\kij = \exp\bigl(-\|\dij\|^{2}/\gamma\bigr)$ with bandwidth $\gamma$, the kernel
	quantities admit closed forms,
	\begin{align}
		\rij \;             & =\; -\nabla \kij \;=\; \frac{2}{\gamma}\,\kij\,\dij,                                                          \\
		\nabla \rij \;      & =\; -\nabla^{2} \kij \;=\; \frac{2}{\gamma}\,\kij\!\left(I_{\dd}- \frac{2}{\gamma}\,\dij{\dij}^{\top}\right), \\
		\divergence \rij \; & =\; -\Delta \kij \;=\; \frac{2}{\gamma}\,\kij\!\left(\dd - \frac{2}{\gamma}\|\dij\|^{2}\right),
	\end{align}
	substituted into~\eqref{eq:eddy_decomp}, give the closed-form variant we call
	\eddyrbf{},
	\begin{equation}
		\psii_{\eddy{}}\;=\; \frac{2}{\gamma(\np-1)}\sum_{j\neq i}\kij\bigl(C^{\delta}
		_{ij}\,\dij \;+\; C^{v}_{ij}\,\vj\bigr), \label{eq:eddy_rbf}
	\end{equation}
	with coefficients
	\begin{align}
		C^{\delta}_{ij}\; & =\; \tfrac{2}{\gamma}\langle \dij, \vj\rangle + \langle \vj, \si \rangle, \\
		C^{v}_{ij}\;      & =\; \dd - 1 - \tfrac{2}{\gamma}\|\dij\|^{2} - \langle\dij , \si \rangle.
	\end{align}

	\section{\eddyrbf{} Theoretical Analysis}
	\label{app:eddy_rbf_theory}

	\subsection{High-Dimensional Asymptotics}
	First, we will show that the divergence-free kernel dominates {\eddyrbf{}} in high
	dimensions:
	\begin{equation*}
		\psii_{\eddy{}}\;\approx\; \frac{1}{\np-1}\sum_{j\neq i}\Kij\,\vj. \tag{\ref{eq:eddy_asym} revisited}
	\end{equation*}

	We assume that: (1) the vector $\vj$ and score $\si$ scale by the square root
	of the dimension: $\| \vj \|, \|\si \| = \Theta\bigl(\sqrt{d}\bigr)$ for every
	$i,j\le\np$; (2) the bandwidth is constant w.r.t.\ the dimension:
	$\gamma = \mathcal{O}(1)$; and (3) $\|\dij\| = \mathcal{O}(1)$ for every contributing
	pair $(i,j)$.

	Assumption~(1) comes from the fact that we apply \eddy{} when $t$ is small, so
	$p_{t} \approx \N(0,I_{\dd})$, leading to $\si_{t} \approx -\xp{i}$ thus the
	squared norm $\| \xp{i}\|^{2}$ is distributed like chi-squared with $\dd$
	degrees of freedom $\chi_{\dd}^{2}$, thus $\E \bigl[\| \xp{i}\| \bigr ] \approx
	\sqrt{d}$ with negligible variance as $d$ grows.

	Assumption~(2) can easily be assumed, as $\gamma$ is a user-specified value.

	Assumption~(3) is without loss of generality: since $\kij = \exp\bigl(-\|\dij\|
	^{2}/\gamma\bigr)$ decays exponentially in $\|\dij\|^{2}/\gamma$, any pair
	with $\|\dij\| \gg \gamma = \mathcal{O}(1)$ contributes an exponentially small
	factor and hence negligible in the sum~\eqref{eq:eddy_decomp}; the guidance is
	therefore dominated by close pairs satisfying $\|\dij\| = \mathcal{O}(1)$.

	We claim that
	\begin{equation}
		\| \Aj \si \| / \| \Kij \vj \| \to \mathcal{O}\bigl(1/\sqrt{\dd}\bigr)\quad \text{
		as }\quad \dd\to\infty, \label{eq:eddy_asym_proof}
	\end{equation}
	and indeed from one hand $\| \Aj \si \| = \mathcal{O}(\kij\,\dd)$ since, using
	$\rij = \tfrac{2\kij}{\gamma}\dij$ (Section~\ref{app:eddy_rbf_cff}),
	\begin{equation}
		\Aj\,\si \;=\; \frac{2\kij}{\gamma}\bigl[\langle \dij, \si \rangle\,\vj - \langle
		\vj, \si\rangle \,\dij\bigr], \label{eq:aj_si_expand}
	\end{equation}
	and Cauchy--Schwarz bound each term in the brackets by $\|\dij\| \cdot \|\si\|
	\cdot \| \vj \| = \mathcal{O}(d)$; and from the other hand
	$\| \Kij \vj \| = \Theta\bigl(\kij \,\dd^{3/2}\bigr)$: from~\eqref{eq:eddy_rbf},
	the divergence-free component is
	\begin{equation}
		\Kij\,\vj \;=\; \frac{2\kij}{\gamma}\Bigl[\Bigl(\dd-1-\frac{2}{\gamma}\|\dij\|
		^{2}\Bigr)\vj \;+\; \frac{2}{\gamma}\langle \dij, \vj\rangle\,\dij\Bigr], \label{eq:kij_muj_expand}
	\end{equation}
	and $\vj$-coefficient is $\Theta(d)$ (notice that
	$\frac{2}{\gamma}\| \dij \| = \mathcal{O}(1)$), so when multiplying with
	$\vj = \Theta(\sqrt{d})$ we get the desired result, thus claim~\refeq{eq:eddy_asym_proof}
	follows.

	\section{\eddy{} Algorithm}

	\begin{algorithm}
		[h]
		\caption{\eddy{} Sampling}
		\label{alg:eddy} \label{alg:eddy} \KwIn{drift $\mu_{t}$, volatility $\sigma_{t}$, kernel $k$; steps $T$ with $dt = 1/T$; FD step $\fdeps$, Hutchinson probes $m$; stop ratio $sr$; particle count $\np$; scaling of \eddy{}'s correction term $\wg$}
		\KwOut{$\np$ approximate samples from $p_{1}$} \BlankLine Sample $\xpt[0]{1},
		\ldots,\xpt[0]{\np}\overset{\mathrm{iid}}{\sim}\N(\mathbf{0}, I_{\dd})$\;
		\For{$t \gets 0$ \KwTo $1-dt$ \KwBy $dt$}{ \For{$i = 1$ \KwTo $\np$}{ \eIf{$t < sr$}{ Compute $\psii_{\eddy{}}\;=\; \frac{1}{\np-1}\sum_{j\neq i}\bigl(\Aj\,\si \;+\; \Kij\,\vj\bigr)$; use the FD estimators in Eq. ~\ref{eq:hvp_fd} and Eq. ~\ref{eq:lap_hutch}\ if necessary;

		}{ $\psii_{\eddy{}}\gets \mathbf{0}$\; } $\xpt[t+dt]{i}\gets$ SDE step (Eq. ~\ref{eq:forward_sde}) with drift $\muj[i] + \wg \cdot \psii_{\eddy{}}$\; } }
		\Return{$\xpt[1]{1},\ldots,\xpt[1]{\np}$}\;
	\end{algorithm}

	\section{Score Derivation via Tweedie's formula in Flow Matching}
	\label{app:tweedie}

	First, let us state Tweedie's formula \cite{efron_tweedies_2011}: given clean $x$
	and estimation $\hat x = x + \epsilon$ with independent noise $\epsilon \sim \N
	(0, \sigma^{2} I_{\dd})$ it follows that $\E[x \mid \hat x] = \hat x + \sigma^{2}
	\, \score(\hat x)$.

	In training, a flow matching model is trained to predict $\xt[1] - \xt[0]$
	given $\xt = t\xt[1] + (1-t)\xt[0]$ for noise
	$\xt[0] \sim p_{0} = \N(0,I_{\dd})$ and data point $\xt[1] \sim p_{1}$. Thus,
	given a state $\xt$ the model will estimate $\E [\xt[1] - \xt[0] \mid \xt]$
	for some unknown $\xt[1]$.

	From one hand, using Tweedie's formula over "clean" $t\xt[1]$ and estimation $\xt$
	with independent noise $(1-t)\xt[0] \sim \N\bigl(0,(1-t)^{2}I_{\dd}\bigr)$, we
	get
	\begin{equation}
		\E[t\xt[1] \mid \xt] = \xt + (1-t)^{2}\, \score_{t}(\xt)
	\end{equation}
	From the other hand, it easily follows from the $\xt$'s definition that $t\xt[1
	] = t(1-t)(\xt[1] - \xt[0]) + t\xt$, so by taking expectation conditioned on
	$\xt$ we get
	\begin{equation}
		\E[t\xt[1] \mid \xt] = t(1-t)\,\E[\xt[1] - \xt[0]\mid \xt] + t\xt
	\end{equation}
	Putting it all together
	\begin{equation}
		\score_{t}(\xt) = \frac{t\E[\xt[1]-\xt[0]\mid\xt] - \xt}{1-t}
	\end{equation}

	\section{Experimental Results}
	\label{app:exp_results}

	Table~\ref{tab:gmm} reports the full two-sample statistical tests for the
	synthetic experiment of Section~\ref{sec:exp_2d}, complementing Fig.~\ref{fig:gmm}.
	For each guidance strength $\wg \in \{0.5,\,1.75,\,3.0\}$ we run Kolmogorov--Smirnov,
	Mann--Whitney and Welch $t$-tests on the radial distance to the nearest mode
	and on the angular position around it. Every $p$-value lies well above $0.05$,
	so \eddyrbf{} is statistically indistinguishable from \base{} at the marginal
	level, in agreement with Claim~\ref{claim:marginal_preservation}. Tables~\ref{tab:pareto_flux}
	and~\ref{tab:pareto_sdxl} list the numbers behind the Pareto curves of Fig.~\ref{fig:pareto}
	on FLUX.1-dev and SDXL. Within each table the entries are grouped into three diversity
	tiers --- \emph{Low}, \emph{Medium}, and \emph{High} --- obtained by
	partitioning the DINO pairwise-similarity axis into three contiguous bands and
	reporting, for every method, the Pareto-optimal sweep configuration that falls
	inside the band. \emph{Low} corresponds to a small diversity push relative to
	\base{} (DINO Sim closest to the unguided baseline), \emph{Medium} to a moderate
	push, and \emph{High} to the strongest induced diversity (lowest DINO Sim, hence
	most diverse). This pairing lets us read off quality at \emph{matched
	diversity} across \eddy{}, \pg{}, and \cads{}, which is the only meaningful
	comparison given that the three methods expose different and non-commensurate
	control knobs.

	\begin{table}[!h]
		\centering
		\small
		\caption{Qualitative results for the synthetic experiment. The trade-off is controlled
		in \eddy{} by the bandwidth $\wg$, and in \pg{} by the guidance weight $\wg$.
		We measure both test statistics and p-value. Under the null-hypothesis both
		distributions are identical.}
		. \label{tab:gmm}
		\begin{tabular}{@{} ll rr rr rr @{}}
			\toprule Metric   & Test        & \multicolumn{2}{c}{$w_{g}=0.5$} & \multicolumn{2}{c}{$w_{g}=1.75$} & \multicolumn{2}{c}{$w_{g}=3.0$} \\
			                  &             & \textit{stat}                   & \textit{p}                       & \textit{stat}                  & \textit{p} & \textit{stat} & \textit{p} \\
			\midrule Distance & \textsc{KS} & $0.026$                         & $0.367$                          & $0.022$                        & $0.604$    & $0.021$       & $0.628$    \\
			                  & \textsc{MW} & $0.492$                         & $0.313$                          & $0.504$                        & $0.604$    & $0.506$       & $0.472$    \\
			                  & Welch       & $-0.952$                        & $0.341$                          & $0.540$                        & $0.589$    & $0.344$       & $0.731$    \\
			\midrule Angle    & \textsc{KS} & $0.021$                         & $0.628$                          & $0.020$                        & $0.723$    & $0.022$       & $0.557$    \\
			                  & \textsc{MW} & $0.503$                         & $0.738$                          & $0.501$                        & $0.897$    & $0.495$       & $0.514$    \\
			                  & Welch       & $0.313$                         & $0.754$                          & $0.111$                        & $0.911$    & $-0.638$      & $0.524$    \\
			\bottomrule
		\end{tabular}
	\end{table}

	\begin{table}[!h]
		\centering
		\small
		\caption{ Pareto frontiers in (a) FLUX.1-dev and (b). DINO Sim represent similarity
		between pairwise samples (lower = more diverse). FID and CMMD represent
		quality and computed as the mean of variation $\ell$ of compared algorithm
		vs.\ variation $(\ell{+}1)\bmod \np$ of \base{} (lower = higher quality). CLIPScore
		represent prompt alignment (higher = more aligned). }
		\begin{subtable}
			{\linewidth}

			\begin{tabular}{@{} ll ccccc @{}}
				\toprule                         & \textbf{Method} & \textbf{DINO Sim} $\downarrow$    & \textbf{CMMD} ($\times 10^{-4}$) $\downarrow$ & \textbf{FID} $\downarrow$                   & \textbf{CLIP Score} $\uparrow$              & \textbf{Aesthetic} $\uparrow$              \\
				\midrule                         & \base{}         & $0.707{\scriptstyle\,\pm\,0.004}$ & $0.01{\scriptstyle\,\pm\,0.00}$               & $20.530{\scriptstyle\,\pm\,0.105}$          & $25.610{\scriptstyle\,\pm\,0.071}$          & $6.430{\scriptstyle\,\pm\,0.009}$          \\
				\midrule \multirow{3}{*}{Low}    & \eddy{} (Ours)  & $0.604{\scriptstyle\,\pm\,0.005}$ & $0.41{\scriptstyle\,\pm\,0.03}$               & $26.730{\scriptstyle\,\pm\,0.440}$          & $\mathbf{25.377}{\scriptstyle\,\pm\,0.068}$ & $6.124{\scriptstyle\,\pm\,0.012}$          \\
				                                 & \pg{}           & $0.627{\scriptstyle\,\pm\,0.004}$ & $2.11{\scriptstyle\,\pm\,0.04}$               & $30.822{\scriptstyle\,\pm\,0.425}$          & $24.927{\scriptstyle\,\pm\,0.066}$          & $6.013{\scriptstyle\,\pm\,0.010}$          \\
				                                 & \cads{}         & $0.618{\scriptstyle\,\pm\,0.005}$ & $\mathbf{0.31}{\scriptstyle\,\pm\,0.01}$      & $\mathbf{25.032}{\scriptstyle\,\pm\,0.107}$ & $23.164{\scriptstyle\,\pm\,0.068}$          & $\mathbf{6.563}{\scriptstyle\,\pm\,0.010}$ \\
				\midrule \multirow{3}{*}{Medium} & \eddy{} (Ours)  & $0.537{\scriptstyle\,\pm\,0.005}$ & $1.23{\scriptstyle\,\pm\,0.07}$               & $34.512{\scriptstyle\,\pm\,0.748}$          & $\mathbf{24.981}{\scriptstyle\,\pm\,0.066}$ & $5.880{\scriptstyle\,\pm\,0.014}$          \\
				                                 & \pg{}           & $0.568{\scriptstyle\,\pm\,0.004}$ & $4.86{\scriptstyle\,\pm\,0.08}$               & $46.133{\scriptstyle\,\pm\,0.659}$          & $24.418{\scriptstyle\,\pm\,0.064}$          & $5.719{\scriptstyle\,\pm\,0.011}$          \\
				                                 & \cads{}         & $0.575{\scriptstyle\,\pm\,0.005}$ & $\mathbf{0.57}{\scriptstyle\,\pm\,0.01}$      & $\mathbf{27.876}{\scriptstyle\,\pm\,0.271}$ & $22.530{\scriptstyle\,\pm\,0.074}$          & $\mathbf{6.609}{\scriptstyle\,\pm\,0.010}$ \\
				\midrule \multirow{3}{*}{High}   & \eddy{} (Ours)  & $0.497{\scriptstyle\,\pm\,0.004}$ & $2.18{\scriptstyle\,\pm\,0.03}$               & $41.353{\scriptstyle\,\pm\,0.298}$          & $\mathbf{24.737}{\scriptstyle\,\pm\,0.066}$ & $5.660{\scriptstyle\,\pm\,0.015}$          \\
				                                 & \pg{}           & $0.495{\scriptstyle\,\pm\,0.004}$ & $8.54{\scriptstyle\,\pm\,0.23}$               & $69.051{\scriptstyle\,\pm\,1.328}$          & $23.792{\scriptstyle\,\pm\,0.065}$          & $5.362{\scriptstyle\,\pm\,0.012}$          \\
				                                 & \cads{}         & $0.522{\scriptstyle\,\pm\,0.005}$ & $\mathbf{0.80}{\scriptstyle\,\pm\,0.02}$      & $\mathbf{31.420}{\scriptstyle\,\pm\,0.108}$ & $21.584{\scriptstyle\,\pm\,0.080}$          & $\mathbf{6.615}{\scriptstyle\,\pm\,0.010}$ \\
				\bottomrule
			\end{tabular}
			\subcaption{Pareto frontiers on FLUX.1-dev.} \label{tab:pareto_flux}
		\end{subtable}
		\begin{subtable}
			{\linewidth}

			\begin{tabular}{@{} ll ccccc @{}}
				\toprule                         & \textbf{Method} & \textbf{DINO Sim} $\downarrow$    & \textbf{CMMD} ($\times 10^{-4}$) $\downarrow$ & \textbf{FID} $\downarrow$                   & \textbf{CLIP Score} $\uparrow$              & \textbf{Aesthetic} $\uparrow$              \\
				\midrule                         & \base{}         & $0.666{\scriptstyle\,\pm\,0.004}$ & $0.02{\scriptstyle\,\pm\,0.00}$               & $25.595{\scriptstyle\,\pm\,0.073}$          & $26.886{\scriptstyle\,\pm\,0.063}$          & $6.227{\scriptstyle\,\pm\,0.009}$          \\
				\midrule \multirow{2}{*}{Low}    & \eddy{} (Ours)  & $0.622{\scriptstyle\,\pm\,0.004}$ & $\mathbf{0.22}{\scriptstyle\,\pm\,0.01}$      & $31.507{\scriptstyle\,\pm\,0.511}$          & $\mathbf{26.524}{\scriptstyle\,\pm\,0.065}$ & $\mathbf{6.234}{\scriptstyle\,\pm\,0.010}$ \\
				                                 & \pg{}           & $0.611{\scriptstyle\,\pm\,0.004}$ & $1.58{\scriptstyle\,\pm\,0.04}$               & $\mathbf{30.131}{\scriptstyle\,\pm\,0.216}$ & $26.271{\scriptstyle\,\pm\,0.061}$          & $5.898{\scriptstyle\,\pm\,0.011}$          \\
				\midrule \multirow{2}{*}{Medium} & \eddy{} (Ours)  & $0.537{\scriptstyle\,\pm\,0.005}$ & $\mathbf{1.45}{\scriptstyle\,\pm\,0.10}$      & $51.420{\scriptstyle\,\pm\,1.032}$          & $\mathbf{25.670}{\scriptstyle\,\pm\,0.066}$ & $\mathbf{6.058}{\scriptstyle\,\pm\,0.012}$ \\
				                                 & \pg{}           & $0.551{\scriptstyle\,\pm\,0.005}$ & $3.56{\scriptstyle\,\pm\,0.10}$               & $\mathbf{39.253}{\scriptstyle\,\pm\,0.563}$ & $25.587{\scriptstyle\,\pm\,0.063}$          & $5.658{\scriptstyle\,\pm\,0.012}$          \\
				\midrule \multirow{2}{*}{High}   & \eddy{} (Ours)  & $0.481{\scriptstyle\,\pm\,0.004}$ & $\mathbf{2.85}{\scriptstyle\,\pm\,0.20}$      & $69.135{\scriptstyle\,\pm\,1.962}$          & $\mathbf{24.951}{\scriptstyle\,\pm\,0.068}$ & $\mathbf{5.891}{\scriptstyle\,\pm\,0.013}$ \\
				                                 & \pg{}           & $0.486{\scriptstyle\,\pm\,0.004}$ & $5.92{\scriptstyle\,\pm\,0.06}$               & $\mathbf{53.471}{\scriptstyle\,\pm\,0.351}$ & $24.762{\scriptstyle\,\pm\,0.066}$          & $5.413{\scriptstyle\,\pm\,0.014}$          \\
				\bottomrule
			\end{tabular}
			\subcaption{Pareto frontiers on SDXL.} \label{tab:pareto_sdxl}
		\end{subtable}
	\end{table}

	\section{Experimental Details}
	\label{app:exp_details}

	This section gathers all hyperparameters and configurations used in the text-to-image
	experiments of Section~\ref{sec:exp_t2i}, complementing the quantitative
	results of Appendix~\ref{app:exp_results}. Tables~\ref{tab:flux_base_hparam} and~\ref{tab:sdxl_base_hparam}
	list the base sampler settings shared by every method on FLUX.1-dev and SDXL
	respectively (model checkpoint, resolution, prompt sample, inference steps, classifier-free
	guidance scale, and the seeds used for prompts and images), so that all
	reported numbers correspond to a single, fully reproducible decoding pipeline.
	Table~\ref{tab:eval_hparam} reports the evaluation backbones (CLIP and DINOv2 large)
	and the CMMD kernel bandwidth used to compute the quality and diversity
	metrics, while Table~\ref{tab:kernel_hparam} lists the lightweight feature extractor
	(DINOv2 small) and the fast latent decoders that build the RBF kernel inside \eddy{}
	and \pg{}. Table~\ref{tab:elapsed_time} reports the wall-clock overhead per
	generation for each guidance configuration on a single NVIDIA A100 80GB PCIe
	GPU, isolating the cost added by each method on top of \base{}.

	Tables~\ref{tab:flux_sweep_hparam} and~\ref{tab:sdxl_sweep_hparam} document
	the full hyperparameter sweep used to build the Pareto frontiers of Tables~\ref{tab:pareto_flux}
	and~\ref{tab:pareto_sdxl}: for each method we list the search ranges over guidance
	weight $\wg$, stop ratio $sr$, kernel bandwidth $\gamma$, and the method-specific
	knobs of \cads{} and \cno{}. For \eddy{} we also fix the Hutchinson trace
	estimator with $m = 25$ samples and the finite-difference step
	$\epsilon = 10^{-3}$ used to approximate the kernel second-order derivatives
	in Eq.~\eqref{eq:eddy_decomp}. Finally, Tables~\ref{tab:eddy_chosen_hparams},~\ref{tab:pg_chosen_hparams},
	and~\ref{tab:cads_chosen_hparams} report the specific hyperparameter
	configurations selected at each of the three diversity tiers --- \emph{Low},
	\emph{Medium}, and \emph{High} --- so that the matched-diversity comparison of
	Appendix~\ref{app:exp_results} can be reproduced exactly.

	\begin{table}[!h]
		\centering
		\caption{FLUX.1-dev base model parameters.}
		\label{tab:flux_base_hparam}
		\begin{tabularx}
			{0.75\textwidth}{@{} l X @{}} \toprule \textbf{Parameter} & \textbf{Value}
			\\ \midrule Model & black-forest-labs/FLUX.1-dev \\ Resolution & 512 $\times$
			512 \\ Prompts (MS-COCO val2017) & 2048 \\ Images per prompt & 4 \\ Inference
			steps & 30 \\ Guidance scale & 3.5 \\ Prompt seed & 42 \\ Image seeds & 42,
			43, 44, 45 \\ \bottomrule
		\end{tabularx}
	\end{table}

	\begin{table}[h]
		\centering
		\caption{Wall-clock overhead (seconds per generation) for each guidance
		configuration on a NVIDIA A100 80GB PCIe GPU.}
		\label{tab:elapsed_time}
		\begin{tabular}{@{} l r r @{}}
			\toprule Method  & \textbf{SDXL}            & \textbf{Flux}            \\
			\midrule \base{} & \texttt{8.0$\pm$0.1}\,s  & \texttt{28.1$\pm$4.6}\,s \\
			\pg{}            & \texttt{9.0$\pm$0.1}\,s  & \texttt{28.4$\pm$5.2}\,s \\
			\eddy{}          & \texttt{23.0$\pm$0.1}\,s & \texttt{35.1$\pm$5.0}\,s \\
			\bottomrule
		\end{tabular}
	\end{table}

	\begin{table}[!h]
		\centering
		\caption{SDXL base model parameters.}
		\label{tab:sdxl_base_hparam}
		\begin{tabularx}
			{0.75\textwidth}{@{} l X @{}} \toprule \textbf{Parameter} & \textbf{Value}
			\\ \midrule Model & stabilityai/stable-diffusion-xl-base-1.0 \\ Resolution
			& 1024 $\times$ 1024, scaled to 512 $\times$ 512 \\ Prompts (MS-COCO
			val2017) & 2048 \\ Images per prompt & 4 \\ Inference steps & 30 \\ Guidance
			scale (CFG) & 7.5 \\ Prompt seed & 42 \\ Image seeds & 42, 43, 44, 45 \\
			\bottomrule
		\end{tabularx}
	\end{table}

	\begin{table}[!h]
		\centering
		\caption{Evaluation hyperparameters.}
		\label{tab:eval_hparam}
		\begin{tabularx}
			{0.75\textwidth}{@{} l X @{}} \toprule \textbf{Parameter} & \textbf{Value}
			\\ \midrule CLIP model & openai/clip-vit-large-patch14 \\ DINO model &
			facebook/dinov2-with-registers-large \\ CMMD kernel $\sigma$ & 10.0 \\
			\bottomrule
		\end{tabularx}
	\end{table}

	\begin{table}[!h]
		\centering
		\caption{Kernel hyperparameters.}
		\label{tab:kernel_hparam}
		\begin{tabularx}
			{0.75\textwidth}{@{} l X @{}} \toprule \textbf{Parameter} & \textbf{Value}
			\\ \midrule DINO model & facebook/dinov2-with-registers-small \\ FLUX.1-dev
			Fast Decoder & madebyollin/taef1 \\ SDXL Fast Decoder & madebyollin/taesdxl
			\\ \bottomrule
		\end{tabularx}
	\end{table}

	\begin{table}[!h]
		\centering
		\caption{FLUX.1-dev sweep parameters.}
		\label{tab:flux_sweep_hparam}
		\begin{tabularx}
			{0.75\textwidth}{@{} l X @{}} \toprule \textbf{Parameter} & \textbf{Value}
			\\ \midrule \pg{} guidance weight $\wg$ & 500 -- 2000 \\ \pg{} stop ratio $s
			r$ & 0.2 \\ \midrule \eddy{} guidance weight & 400--600 \\ \eddy{} stop
			ratio $sr$ & 0.2 \\ \eddy{} bandwidth $\gamma$ & 0.01 -- 0.016 \\ \eddy{} Hutchinson
			trace samples & 25 \\ \eddy{} Finite Differences $\epsilon$ & 0.001 \\
			\midrule \cads{} $\tau_{1}$ & 0.1 -- 0.9 (for $\tau_{1} \le \tau_{2}$) \\
			\cads{} $\tau_{2}$ & 0.1 -- 0.9 \\ \cads{} noise scale & 0.0001 -- 0.1 (log
			scaled) \\ \cads{} rescale & true, false \\ \cads{} prompt guidance scale &
			1.0 -- 7.5 \\ \midrule \cno{} optimization steps & 3, 5 \\ \cno{} $\tau$ &
			0.1 \\ \cno{} $\gamma$ & 0.6 -- 2.0 \\ \cno{} bandwidth & median-heuristic,
			0.05 -- 0.5 \\ \cno{} $\eta$ & 0.005 -- 0.02 \\ \midrule \cads{} $\tau_{2}$
			& 0.1 -- 0.9 \\ \cads{} noise scale & 0.0001 -- 0.1 (log scaled) \\ \cads{}
			rescale & true, false \\ \cads{} prompt guidance scale & 1.0 -- 7.5 \\
			\bottomrule
		\end{tabularx}
	\end{table}

	\begin{table}[!h]
		\centering
		\caption{SDXL sweep parameters.}
		\label{tab:sdxl_sweep_hparam}
		\begin{tabularx}
			{0.75\textwidth}{@{} l X @{}} \toprule \textbf{Parameter} & \textbf{Value}
			\\ \midrule \pg{} guidance weight $\wg$ & 2000 -- 9000 \\ \pg{} stop ratio
			$sr$ & 0.2, 0.5, 0.8 \\ \midrule \eddy{} guidance weight & 50--150 \\ \eddy{}
			stop ratio $sr$ & 0.2, 0.5, 0.8 \\ \eddy{} bandwidth $\gamma$ & 0.001 --
			1.0 (log scaled) \\ \eddy{} Hutchinson trace samples & 25 \\ \eddy{}
			Finite Differences $\epsilon$ & 0.001 \\ \midrule \cads{} $\tau_{1}$ & 0.1
			-- 0.9 (for $\tau_{1} \le \tau_{2}$) \\ \cads{} $\tau_{2}$ & 0.1 -- 0.9 \\
			\cads{} noise scale & 0.0001 -- 0.1 (log scaled) \\ \cads{} rescale & true,
			false \\ \cads{} prompt guidance scale & 1.0 -- 7.5 \\ \midrule \cads{} $\tau
			_{2}$ & 0.1 -- 0.9 \\ \cads{} noise scale & 0.0001 -- 0.1 (log scaled) \\
			\cads{} rescale & true, false \\ \cads{} prompt guidance scale & 3.5 --
			17.5 \\ \bottomrule
		\end{tabularx}
	\end{table}

	\begin{table}[!h]
		\centering
		\caption{\eddy{} chosen hyperparameters per diversity level. Stop ratio
		$sr = 0.2$ and Hutchinson trace samples $m = 25$ are fixed across all
		settings.}
		\label{tab:eddy_chosen_hparams}
		\begin{tabularx}
			{0.75\textwidth}{@{} l l c c @{}} \toprule \textbf{Diversity} & \textbf{Model}
			& $\wg$ & $\gamma$ \\ \midrule \multirow{2}{*}{Low} & FLUX.1-dev & $75$ &
			$0.010$ \\ & SDXL & $500$ & $0.013$ \\ \midrule \multirow{2}{*}{Medium} & FLUX.1-dev
			& $75$ & $0.013$ \\ & SDXL & $500$ & $0.016$ \\ \midrule \multirow{2}{*}{High}
			& FLUX.1-dev & $75$ & $0.016$ \\ & SDXL & $500$ & $0.018$ \\ \bottomrule
		\end{tabularx}
	\end{table}

	\begin{table}[!h]
		\centering
		\caption{\pg{} chosen hyperparameters per diversity level. Stop ratio
		$sr = 0.2$ is fixed across all settings.}
		\label{tab:pg_chosen_hparams}
		\begin{tabularx}
			{0.75\textwidth}{@{} l l c @{}} \toprule \textbf{Diversity} & \textbf{Model}
			& $\wg$ \\ \midrule \multirow{2}{*}{Low} & FLUX.1-dev & $4000$ \\ & SDXL &
			$1200$ \\ \midrule \multirow{2}{*}{Medium} & FLUX.1-dev & $5000$ \\ & SDXL
			& $1600$ \\ \midrule \multirow{2}{*}{High} & FLUX.1-dev & $6000$ \\ & SDXL
			& $2000$ \\ \bottomrule
		\end{tabularx}
	\end{table}

	\begin{table}[!h]
		\centering
		\caption{\cads{} chosen hyperparameters per diversity level on FLUX.1-dev. Parameters
		$\tau_{1} = 0.7$, $\tau_{2} = 0.8$, rescale $= \mathrm{False}$, and guidance
		scale $g = 5.0$ are fixed across all settings.}
		\label{tab:cads_chosen_hparams}
		\begin{tabularx}
			{0.75\textwidth}{@{} l c @{}} \toprule \textbf{Diversity} & \textbf{Noise
			scale} \\ \midrule Low & $0.05$ \\ Medium & $0.07$ \\ High & $0.08$ \\
			\bottomrule
		\end{tabularx}
	\end{table}
\end{document}